\documentclass{article}
\usepackage{iclr2027_conference,times}
\usepackage[utf8]{inputenc}
\DeclareUnicodeCharacter{2212}{\ensuremath{-}}
\usepackage[T1]{fontenc}
\usepackage{microtype}
\usepackage{url}
\usepackage{amsmath,amssymb}
\usepackage{graphicx}
\usepackage{booktabs}
\usepackage{float}
\usepackage{afterpage}
\usepackage{tikz}
\usetikzlibrary{arrows.meta,positioning,fit,backgrounds,calc}
\usepackage{pgfplots}
\usepackage{pgfplotstable}
\pgfplotsset{compat=1.18}
\usepgfplotslibrary{groupplots,fillbetween,patchplots}
\pgfplotsset{
  somapanel/.style={
    width=3.72cm, height=4.10cm, scale only axis,
    xmode=log, log basis x=10, ymin=1.67, ymax=2.63,
    ymajorgrids, xmajorgrids, xminorgrids,
    grid style={gray!25, very thin}, tick align=outside, tick pos=left,
    xlabel={wall-clock hours}, every axis label/.style={font=\scriptsize},
    tick label style={font=\scriptsize}, title style={font=\small},
    legend style={font=\scriptsize, draw=none, fill=none, at={(0.98,0.98)}, anchor=north east,
                  cells={anchor=west}, row sep=-1.5pt},
    legend cell align=left, mark size=1.25pt, line width=0.9pt,
    every axis plot/.style={mark=*},
  }
}
\definecolor{somaA}{HTML}{481D6F}\definecolor{somaB}{HTML}{423F85}
\definecolor{somaC}{HTML}{2A778E}\definecolor{somaD}{HTML}{1E9D89}\definecolor{somaE}{HTML}{7DAF17}
\definecolor{somaF}{HTML}{24908C}
\definecolor{somaG}{HTML}{B5DE2B}
\definecolor{somaOrange}{HTML}{E76F51}
\definecolor{secBlue}{HTML}{2563A6}
\definecolor{secTeal}{HTML}{16858C}
\definecolor{secGreen}{HTML}{3A9B62}
\definecolor{secPaleBlue}{HTML}{EAF2F8}
\definecolor{secPaleGreen}{HTML}{EAF6EF}
\definecolor{cmpSoma}{HTML}{16865C}
\definecolor{cmpTerminal}{HTML}{73CFA2}
\definecolor{cmpBptt}{HTML}{2F6FB0}
\definecolor{cmpSpsa}{HTML}{D1495B}
\definecolor{cmpEggroll}{HTML}{76528F}

\definecolor{somaT01}{HTML}{440154}\definecolor{somaT02}{HTML}{481F70}
\definecolor{somaT03}{HTML}{443983}\definecolor{somaT04}{HTML}{3B528B}
\definecolor{somaT05}{HTML}{31688E}\definecolor{somaT06}{HTML}{287C8E}
\definecolor{somaT07}{HTML}{21918C}\definecolor{somaT08}{HTML}{20A486}
\definecolor{somaT09}{HTML}{35B779}\definecolor{somaT10}{HTML}{5EC962}
\definecolor{somaT11}{HTML}{90D743}\definecolor{somaT12}{HTML}{C8E020}
\definecolor{somaT13}{HTML}{FDE725}
\usepackage[ruled,vlined,linesnumbered]{algorithm2e}
\usepackage{xcolor}
\usepackage[colorlinks=true,linkcolor=blue,citecolor=blue,urlcolor=blue]{hyperref}

\newcommand{\soma}{SOMA}

\newcommand{\R}{\mathbb{R}}
\newcommand{\E}{\mathbb{E}}
\newcommand{\softmax}{\operatorname{softmax}}

\usepackage{amsthm}
\newtheorem{proposition}{Proposition}
\newtheorem{corollary}[proposition]{Corollary}

\iclrfinalcopy

\title{Scaling Zero-Order Pretraining\\through Model Sharding}

\author{
\begin{tabular}{c}
Francois Chaubard \\
\normalfont Stanford University
\end{tabular}
\And
\begin{tabular}{c}
Mykel J. Kochenderfer \\
\normalfont Stanford University
\end{tabular}
\And
\begin{tabular}{c}
Chris R{\'e} \\
\normalfont Stanford University
\end{tabular}
}
\begin{document}
\raggedbottom
\maketitle
\lhead{Preprint}
\begin{abstract}
Zero-order optimization (ZO) enables training without backpropagation, making it relevant to forward-only hardware and non-differentiable loss, but its gradient variance grows with perturbed dimension. This inhibits large model training. Sharded Optimization Mixture of Assemblies (SOMA) is an architecture designed with ZO in mind. SOMA is an ensemble of LSTM experts that train independently on $N$ clusters of data using simultaneous perturbation stochastic approximation (SPSA). Its separable loss function removes cross-expert perturbation noise at the cost of jointly learned representations across domains. Experts train independently, without exchanging gradients, activations or optimizer state. We use 80,000 estimated RTX~5090 GPU-hours to study SOMA compared to baseline methods. Modest sharding improves training compute efficiency over all tested monolithic ZO controls. We study a 8.44M model at 150 aggregate GPU-hour budget and show SOMA $N=2$ with 64 perturbations reaches 1.76 test nats/byte, versus 2.00--2.11 for monolithic SPSA at 64, 256 or 1,024 perturbations and 2.21 for EGGROLL. On WikiText-103, these frozen checkpoints reach 2.07, 2.25--2.36 and 2.49, respectively. On a fixed separable objective with equal-size blocks, we prove that independent losses reduce relative gradient variance to approximately $1/N$ of a shared-loss estimator's. Holding starting weights, data, perturbations and compute fixed, independent rather than summed losses lower SOMA $N=4$ test loss by 0.035 nats/byte after 1,000 updates across three seeds. Finally, we show larger ensembles offer a separate inference benefit. At similar model size with top-$k$ routing ($k=4$), SOMA $N=256$ achieves 2.36M tokens/s versus 257k for SOMA $N=8$ ($9.19\times$, including routing), at lower test loss (1.68 versus 1.71), albeit with SOMA $N=256$ using $59.9\times$ as much aggregate training compute. We release all training and evaluation code and checkpoints for reproduction.
\end{abstract}
\definecolor{figSevenGreenLight}{HTML}{63B881}
\definecolor{figSevenGreenMedium}{HTML}{238B45}
\definecolor{figSevenGreenDark}{HTML}{005A32}
\definecolor{figSevenRedLight}{HTML}{E05C5C}
\definecolor{figSevenRedMedium}{HTML}{D22828}
\definecolor{figSevenRedDark}{HTML}{A31F1F}

\begingroup
\setlength{\intextsep}{4pt}
\begin{figure}[H]
\centering
\definecolor{somaC}{HTML}{2A778E}
\definecolor{somaE}{HTML}{7DAF17}
\definecolor{cmpSoma}{HTML}{16865C}
\definecolor{cmpSpsa}{HTML}{D1495B}
\definecolor{cmpEggroll}{HTML}{76528F}
{\small Model size $\simeq8.44$M, SOMA $n_{\rm pert}=B=64$, SPSA $B=64$}\\[3pt]
\begin{tikzpicture}
\begin{groupplot}[
 group style={group size=1 by 1,horizontal sep=.70cm},
 width=12.7cm,height=3.3cm,scale only axis,
 ymin=1.6,ymax=3.3,ytick={1.6,1.9,2.2,2.5,2.8,3.1},
 ymajorgrids,xmajorgrids,grid style={gray!20},minor x tick num=0,
 tick align=outside,tick pos=left,
 tick label style={font=\scriptsize},label style={font=\scriptsize},
 title style={font=\small},
 every axis plot/.append style={line width=.85pt,mark size=1.25pt}]
\nextgroupplot[
 xmode=log,log basis x=10,xmin=10,xmax=50000,xtick={10,100,1000,10000},xticklabels={10,100,{1,000},{10,000}},
 xlabel={Aggregate GPU-hours},ylabel={Test loss (nats/byte)},
 legend to name=computeComparisonLegend,legend columns=2,
 legend style={draw=none,font=\scriptsize,column sep=9pt,row sep=2pt},
 legend cell align=left]
\addplot[figSevenGreenLight,solid,mark=*,mark size=1.25pt,forget plot] table[x=gpu_hours,y=loss,col sep=comma]{figdata/rebuttal/figure1_soma_N2_measured.csv};
\addplot[figSevenGreenMedium,solid,mark=triangle*,mark size=1.25pt,forget plot] table[x=gpu_hours,y=loss,col sep=comma]{figdata/rebuttal/figure1_soma_N8_measured.csv};
\addplot[figSevenGreenDark,solid,mark=square*,mark size=1.25pt,forget plot] table[x=gpu_hours,y=loss,col sep=comma]{figdata/rebuttal/figure1_soma_N256_measured.csv};
\addplot[figSevenRedLight,solid,mark=triangle*,mark size=1.25pt,forget plot] table[x=gpu_hours,y=loss,col sep=comma]{figdata/rebuttal/figure1_soma_N1_measured.csv};
\addplot[figSevenRedMedium,solid,mark=square*,mark size=1.25pt,forget plot] table[x=gpu_hours,y=loss,col sep=comma]{figdata/rebuttal/figure1_monolithic_np256_measured.csv};
\addplot[figSevenRedDark,solid,mark=diamond*,mark size=1.25pt,forget plot] table[x=gpu_hours,y=loss,col sep=comma]{figdata/rebuttal/figure1_monolithic_np1024_measured.csv};
\addplot[cmpEggroll,solid,mark=diamond*,mark size=1.25pt,forget plot] table[x=gpu_hours,y=loss,col sep=comma]{figdata/rebuttal/figure1_eggroll_measured.csv};
\end{groupplot}
\node[anchor=south] at ($(group c1r1.north west)!.5!(group c1r1.north east)+(0,.15cm)$)
 {\scriptsize
\newcommand{\legendSwatch}[2]{\tikz[baseline=-.5ex]{\draw[#1,line width=1.15pt] (0,0)--(.36,0);\draw[#1] plot[mark=#2,mark size=2pt] coordinates {(.18,0)};}}
\begin{tabular}{@{}ll@{\hspace{18pt}}ll@{}}
\legendSwatch{figSevenGreenLight}{*} & SOMA $N=2$ (2 GPUs) & \legendSwatch{figSevenRedLight}{triangle*} & Monolithic SPSA $n_{\rm pert}=64$ (4 GPUs)\\[2pt]
\legendSwatch{figSevenGreenMedium}{triangle*} & SOMA $N=8$ (8 GPUs) & \legendSwatch{figSevenRedMedium}{square*} & Monolithic SPSA $n_{\rm pert}=256$ (16 GPUs)\\[2pt]
\legendSwatch{figSevenGreenDark}{square*} & SOMA $N=256$ (256 GPUs) & \legendSwatch{figSevenRedDark}{diamond*} & Monolithic SPSA $n_{\rm pert}=1024$ (32 GPUs)\\[2pt]
\multicolumn{4}{c}{\legendSwatch{cmpEggroll}{diamond*}\quad Monolithic EGGROLL (8 GPUs)}
\end{tabular}};
\end{tikzpicture}
\caption{Modest sharding improves training compute efficiency. Near 150 aggregate GPU-hours, SOMA $N=2$ reaches 1.76 test nats/byte, versus 2.00--2.11 for monolithic SPSA and 2.21 for EGGROLL. Appendix~\ref{app:figure2-controls} gives checkpoint and timing details.}
\label{fig:independent-evaluation}
\end{figure}
\endgroup
\clearpage

\section{Introduction}
Zero-order optimization (ZO) is useful when exact gradients are unavailable or costly, including forward-only hardware and non-differentiable loss. However, ZO methods struggle to improve loss as model size grows because, at a fixed number of perturbations per global step, relative gradient variance increases linearly with the number of perturbed parameters.

Simultaneous perturbation stochastic approximation (SPSA)~\citep{spall1992} estimates updates by averaging scalar loss differences along $n_{\rm pert}$ independent perturbation directions. With $M$ jointly perturbed parameters, leading relative gradient variance scales as $M/n_{\rm pert}$~\citep{nesterov2017}. Controlling that error requires a compute budget of $n_{\rm pert}\propto M$. If forward compute grows linearly with parameters at fixed batch size and context length, compute per global step grows as $O(M^2)$ (Section~\ref{sec:train}). Increasing the perturbation count improves each estimate but leaves fewer updates within a fixed compute budget. Reducing model size makes estimation easier but limits the parameters available for prediction. 

Modern techniques, such as EGGROLL~\citep{sarkar2025eggroll}, improve the efficiency of perturbation evaluations through low-rank structure. We investigate an orthogonal technique, changing the architecture so that total model size can grow without enlarging each independently estimated parameter block and gradient variance.

Following Cluster--Branch--Train--Merge (CBTM)~\citep{gururangan2023}, Sharded Optimization Mixture of Assemblies (SOMA) trains domain-specialized recurrent experts on separate corpus clusters (Figures~\ref{fig:arch} and~\ref{fig:recipe}). Each expert receives its own loss, making the training objective separable and removing noise from perturbations to other experts. Gradient estimation depends on expert size rather than full ensemble size, but experts give up joint learning of recurrent representations across domains. This changes both the architecture and the optimization problem. Domain specialization gives each expert a distinct learning task, independent losses isolate its update signal, and routing combines selected expert predictions at inference. The challenge is whether these independently learned predictors compensate for the shared representations and joint adaptation that decomposition removes.

We compare learned models at approximately fixed total size and aggregate compute, then isolate why, with independent versus summed-loss continuations at fixed architecture and gradient-variance measurements. Six ablations examine data partitioning, initialization, head updates, independent losses, perturbation and batch allocation, and active expert count. Finally, we study SOMA's inference benefits in exchange for larger training budgets. Our contributions are:
\begin{itemize}\itemsep1pt\interlinepenalty=10000
\item We introduce SOMA for independent ZO expert training and demonstrate improved compute efficiency over the tested monolithic controls. Near 8.44M parameters and 150 aggregate GPU-hours, SOMA $N=2$ reaches 1.76 test nats/byte, versus 2.00--2.11 for monolithic SPSA at 64, 256 or 1,024 perturbations and 2.21 for EGGROLL (Figure~\ref{fig:independent-evaluation}). The same frozen checkpoints score 2.07, 2.25--2.36 and 2.49 on WikiText-103 (Table~\ref{tab:wikitext-transfer}). An early comparison with equal compute and tuning budgets favors SOMA $N=2$ over monolithic SPSA in all three optimization seeds (Table~\ref{tab:early-tuning-test}).
\item We explain and test the independent loss mechanism. Controlling for the same expert sizes, and number of experts, we prove that independent losses reduce relative gradient variance to approximately $1/N$ of a shared-loss estimator's. Independent losses give $4.02\times$ lower error compared to summed loss on SOMA $N=4$ (Figure~\ref{fig:theory-empirical}). This translates to lower test loss by 0.035 nats/byte after 1,000 updates across three seeds (Figure~\ref{fig:loss-separation}). We compare the marginal benefits of sharding versus investing in more $n_{\rm pert}$ or batch size ($B$) and show sharding reduces relative gradient error more than increasing $n_{\rm pert}$ or $B$ at comparable aggregate GPU-seconds per global step (Figure~\ref{fig:variance-efficiency}), with $118\times$ lower measured relative centered variance for SOMA $N=256$ than the $N=1$ control (Figure~\ref{fig:allocation-variance}).
\item Finally, we characterize the separate inference benefit of heavier sharding (Figure~\ref{fig:isoparam-wallclock}), measuring how loss changes with the number of active experts at fixed expert width (Figure~\ref{fig:topk-flops-frontier}). At approximately equal total model size with top-$k$ routing ($k=4$), SOMA $N=256$ achieves 2.36M tokens/s versus 257k for SOMA $N=8$ ($9.19\times$, including routing), with test losses of 1.68 and 1.71, albeit with SOMA $N=256$ using $59.9\times$ as much aggregate training compute (Tables~\ref{tab:inference-full} and~\ref{tab:inference-n8-n256-loss}).
\end{itemize}

\section{Related work}\label{sec:related-work}
SOMA combines independent expert training with zeroth-order estimation. Its closest precedents therefore concern both how language models are divided into experts and how their parameters can be learned with ZO.
CBTM trains independent transformer language-model experts on corpus clusters and combines them at inference~\citep{li2022btm,gururangan2023}. SmallTalk LM also trains independent models and routes from a short prefix~\citep{filippova2025smalltalk}. SOMA adopts this organization to study a ZO-specific question, whether local learning signals compensate for the loss of joint adaptation. It differs from jointly trained mixtures of experts, whose router and experts exchange training signals~\citep{shazeer2017}.

Earlier recurrent ZO work scales a monolithic architecture~\citep{chaubard2025cdrge}. EGGROLL uses low-rank evolution strategies for recurrent-model pretraining~\citep{sarkar2025eggroll}. MeZO and related methods reduce estimator or memory costs for language-model fine-tuning~\citep{malladi2023mezo,wang2024lezo,chen2024lozo,yu2024subzero}. MeZO-SVRG uses control variates and Sparse MeZO selects parameters to update, both evaluated in fine-tuning~\citep{gautam2024mezosvrg,liu2025sparsemezo}. Appendix~\ref{app:related-work} gives the broader comparison.

ZO pretraining is distinct from adapting a model already trained with backpropagation. \citet{allaire2025zopretraining} study this difficulty in a 20M-parameter model. KronZO pretrains GPT-2 Small on OpenWebText with Kronecker-structured perturbations~\citep{allaire2026kronzo}. 
EGGROLL reports 3.40 bits/byte (or 2.36 nats/byte) pretraining an INT8 recurrent LLM on MiniPile ~\citep{sarkar2025eggroll}. These results establish other routes to ZO pretraining. Their losses use different datasets and evaluation protocols, so they do not provide a common ranking with FineWeb-Edu. Figure~\ref{fig:independent-evaluation} compares our FP32 EGGROLL reproduction with SOMA and monolithic SPSA on the same test set across aggregate training budgets.

\section{The SOMA architecture}\label{sec:arch}\label{sec:model-size}
SOMA (Figure~\ref{fig:arch}) is an ensemble of experts that each predict independently using their own recurrent state. Expert predictions are combined only at inference, not during training. We do not use transformers as each expert would require a key-value cache that would grow memory with context length. A shared, weight-tied matrix $E$ embeds and decodes bytes. Each expert has two residual blocks, each applying an LSTM followed by a multilayer perceptron (MLP). The fixed router selects the top-$k$ experts, with $k=\min(4,N)$, using term frequency--inverse document frequency (tf--idf) weights~\citep{salton1988term}, reduced by truncated singular value decomposition (SVD)~\citep{deerwester1990indexing}. Their next-byte probabilities are averaged. In Figure~\ref{fig:arch}, expert $f_i$ produces hidden state $h_i$ and prediction $p_i$.
\begin{figure}[!htbp]\centering
\resizebox{\linewidth}{!}{%
\begin{tikzpicture}[node distance=6mm and 11mm,
  box/.style={draw,rounded corners,minimum height=7mm,minimum width=13mm,align=center,font=\small},
  shared/.style={draw,rounded corners,minimum height=7mm,minimum width=14mm,align=center,font=\small,fill=orange!12},
  ex/.style={draw,rounded corners,minimum height=7mm,minimum width=22mm,align=center,font=\scriptsize,fill=blue!8},
  pred/.style={draw,rounded corners,minimum height=8mm,minimum width=21mm,align=center,font=\scriptsize,fill=green!9},
  merge/.style={draw,rounded corners,minimum height=8mm,minimum width=22mm,align=center,font=\scriptsize,fill=purple!8},
  sample/.style={draw,rounded corners,minimum height=7mm,minimum width=16mm,align=center,font=\scriptsize,fill=red!7},
  >={Stealth[length=2mm]}]
  \node[box] (ctx) {context};
  \node[box,below=9mm of ctx] (tok) {token};
  \node[box,right=of ctx] (router) {tf--idf $+$ SVD\\\scriptsize centroid router};
  \node[shared,right=of tok] (E) {shared $E$\\\scriptsize token embedding};
  \node[ex,right=13mm of E,yshift=11mm] (e1) {$f_1$: LSTM$_1$/LSTM$_2$ (cluster 1)};
  \node[ex,below=3mm of e1] (e2) {$f_2$: LSTM$_1$/LSTM$_2$ (cluster 2)};
  \node[ex,below=3mm of e2] (e3) {$\cdots\ f_N$ (cluster $N$)};
  \node[shared,right=10mm of e2] (Et) {shared $E^\top$\\\scriptsize tied decode};
  \node[pred,right=6mm of Et] (pred) {selected predictions\\$p_i=\softmax(h_iE^\top)$};
  \node[merge,right=6mm of pred] (ens) {top-4 ensemble\\$p=|S|^{-1}\!\sum_{i\in S}p_i$};
  \node[sample,right=6mm of ens] (sample) {sample next token\\$y_{t+1}\sim p$};
  \node[fit=(e1)(e2)(e3),inner sep=1.5mm] (bank) {};
  \draw[->] (ctx) -- (router);
  \draw[->] (tok) -- (E);
  \draw[->,dashed] (router.east) to[out=15,in=110]
    node[above,font=\scriptsize]{select top-4 set $S$} (bank.north);
  \draw[->] (E) -- (e1.west); \draw[->] (E) -- (e2.west); \draw[->] (E) -- (e3.west);
  \draw[->] (e1.east) -- (Et); \draw[->] (e2.east) -- (Et); \draw[->] (e3.east) -- (Et);
  \draw[->] (Et) -- (pred); \draw[->] (pred) -- (ens); \draw[->] (ens) -- (sample);
  \draw[<->,dotted,thick,orange!70!black] (E.south) to[out=-90,in=-90,looseness=0.6] node[below,font=\scriptsize,black]{tied (same matrix $E$)} (Et.south);
  \node[font=\scriptsize,below=4mm of E,align=center,text=black!70] {frozen shared head\\(local decode update in seed)};
  \node[font=\scriptsize,above=2mm of bank.north,align=center,text=black!70] {expert bodies: local SPSA};
\end{tikzpicture}
}
\caption{The SOMA architecture. A fixed router selects up to four independently trained LSTM experts. They share a frozen embedding and decoder, and their predictions are averaged. The decoder is trained once in the seed run.}
\label{fig:arch}
\end{figure}

Let $x_{<t}$ be the context before byte position $t$, $y_t$ the target byte, and $p_e$ expert $e$'s predicted distribution. For the router-selected set $S$, the ensemble prediction $p$ averages the $|S|$ expert probabilities,
\begin{equation}
p(y_t\mid x_{<t})=\frac{1}{|S|}\sum_{e\in S}p_e(y_t\mid x_{<t}).
\label{eq:main-mixture}
\end{equation}
Each expert predicts the same byte vocabulary. The router chooses $S$ from the observed context and keeps it fixed over the scored window. Training does not differentiate through this mixture. Each expert learns on its assigned cluster without evaluating the others.

\section{The SOMA recipe}\label{sec:recipe}
To specialize experts without jointly training a router, we use the same corpus partition for training assignment and inference routing. Following the CBTM recipe, we first lightly train an $N=1$ seed for 1,000 updates on a 10B-byte subset of FineWeb-Edu~\citep{penedo2024fineweb}. The seed learns the body and shared embedding/decoder. We cluster the 100B-byte corpus using word-level tf--idf, a 128-dimensional SVD and balanced spherical $k$-means. Each sequence is assigned to one cluster (Figure~\ref{fig:recipe}). We use a byte-level tokenizer with a 256-symbol vocabulary to reduce compute across the repeated forward evaluations required by SPSA.
\begin{figure}[!htbp]\centering
\resizebox{0.95\linewidth}{!}{%
\begin{tikzpicture}[node distance=5mm and 7mm,font=\small,>={Stealth[length=2mm]},
  st/.style={draw,rounded corners,minimum height=9mm,minimum width=23mm,align=center},
  cl/.style={draw,rounded corners,minimum height=6mm,minimum width=10mm,align=center,font=\scriptsize,fill=blue!8}]
  \node[st,fill=yellow!15] (s) {\textbf{Seed}\\\scriptsize pretrain $N{=}1$\\\scriptsize learn $E$ $+$ body};
  \node[st,right=of s,fill=orange!12] (c) {\textbf{Cluster}\\\scriptsize tf--idf $+$ SVD$_{128}$\\\scriptsize balanced $k$-means, $N$ clusters};
  \node[st,right=of c,fill=green!8] (b) {\textbf{Branch}\\\scriptsize copy seed body\\\scriptsize to $N$ experts\\\scriptsize copy $E$};
  \node[st,right=of b,fill=blue!8] (t) {\textbf{Train}\\\scriptsize independent SPSA\\\scriptsize 1 expert / GPU};
  \node[st,right=of t,fill=red!7] (m) {\textbf{Merge}\\\scriptsize tf--idf top-$k$ route\\\scriptsize $+$ ensemble};
  \draw[->] (s)--(c); \draw[->] (c)--(b); \draw[->] (b)--(t); \draw[->] (t)--(m);
  \node[font=\scriptsize,below=1mm of s,text=black!70,align=center]{prepare shared head};
  \node[font=\scriptsize,below=1mm of c,text=black!70,align=center]{offline, once};
  \node[font=\scriptsize,below=1mm of t,text=black!70,align=center]{no routing during training\\no gradient exchange};
  \node[font=\scriptsize,below=1mm of m,text=black!70,align=center]{compute for\\selected experts};
\end{tikzpicture}}
\caption{The SOMA recipe. A seed learns the body and shared embedding and decoder. Clustering assigns each expert its data. Each expert copies the seed body and trains independently with the decoder frozen. The same clustering pipeline uses top-$k$ routing with $k=\min(4,N)$ at inference.}
\label{fig:recipe}
\end{figure}

The seed head uses the cross-entropy derivative with respect to the decoder weights. We use delta-rule to compute the exact decoder-path gradient without backpropagating through the recurrent body or perturbing the tied embedding/decoder weights (Appendix~\ref{app:training-details}), treating hidden states as fixed. Expert bodies use SPSA without backpropagating through the recurrent state. For expert $e$, let $L_e$ be its independent loss and $\theta_e\in\mathbb{R}^{d_e}$ its $d_e$ perturbed parameters. For the dense-probe calculation, independent Rademacher probes $z_j\in\{-1,+1\}^{d_e}$ give the estimate
\begin{equation}
\widehat g_e=\frac{1}{n_{\rm pert}}\sum_{j=1}^{n_{\rm pert}}
\frac{L_e(\theta_e+\varepsilon z_j)-L_e(\theta_e-\varepsilon z_j)}{2\varepsilon}z_j.
\label{eq:main-spsa}
\end{equation}

Each expert uses its own loss, optimizer state and GPU. Training probes are sparse, with density 0.5 on ordinary weights. The empirical predictions use their recorded distribution (Appendix~\ref{app:estimator-iso}). All directions and both signs share a batch before accumulation. We use Adam-style moments~\citep{kingma2015adam} and accumulate independently sampled batches and directions. Keeping the head out of SPSA reduces the search dimension.

At inference, the clustering pipeline selects the nearest $\min(4,N)$ centroids based on the context before the experts run, so the same partition determines both what an expert learns and what sequence it is selected to produce at test time. Appendix~\ref{app:recipe-details} gives the implementation settings. Appendix~\ref{app:training-details} specifies the updates. 

The main sweeps use 1,024-byte windows, $n_{\rm pert}=64$ perturbation directions and batch size $B=64$ sequences per expert unless varied. The learning rate and perturbation radius ($\varepsilon$) start at $10^{-3}$ and decay on expert validation plateaus.

\section{Gradient variance}\label{sec:train}
The purpose of independent losses is to make each perturbed forward pass as informative as possible for gradient estimation. We quantify this benefit by comparing independent and summed losses at fixed total model size. We also compare sharding with increasing perturbations per step. 

To isolate the effect of sharding, we first hold the batch and objective fixed. Let $g_e=\nabla L_e$ be the exact batch gradient, $\|\cdot\|$ the Euclidean norm and $\E$ expectation over the probes. For a loss with three continuous derivatives, nonzero $g_e$, dense Rademacher directions and small $\varepsilon$,
\begin{equation}
\frac{\E\|\widehat g_e-g_e\|^2}{\|g_e\|^2}
=\frac{d_e-1}{n_{\rm pert}}+O(\varepsilon^2).
\label{eq:main-variance}
\end{equation}
For $N$ equal blocks of a separable objective with $M$ total perturbed parameters across all experts, the ratio of independent loss to shared-loss variance is, to leading order,
\begin{equation}
\frac{M/N-1}{M-1}\xrightarrow[M\to\infty]{N\ \mathrm{fixed}}\frac{1}{N}.
\label{eq:main-sharding}
\end{equation}
Independent losses remove cross-expert perturbation noise on a separable objective. Under parameter-linear forward cost, independent and joint perturbation rounds require equal aggregate work, whereas increasing $n_{\rm pert}$ reduces variance by proportionally increasing work. Adding fixed-width experts leaves $d_e$ unchanged. Appendix~\ref{app:block-sharding-proof} gives the proofs and a stationarity bound permitting larger steps for smaller blocks.

Using recorded probes and saved $\varepsilon$ at 10,000 updates, SOMA $N=256$ has $118\times$ lower relative centered variance than $N=1$, while doubling $n_{\rm pert}$ halves variance (Figure~\ref{fig:allocation-variance}). On a fixed objective, independent losses yield a $4.02\times$ reduction versus $4.00\times$ predicted. Figure~\ref{fig:theory-empirical} also tests the batch--perturbation trade-off (Appendices~\ref{app:estimator-iso} and~\ref{app:theory-empirical}). Centered variance excludes bias, which is included in the gradient-error comparison in Section~\ref{sec:compute-cost}. Figures~\ref{fig:loss-separation} and~\ref{fig:independent-evaluation} test the consequences for learning and predictions at a fixed training budget.

\section{Experiments}\label{sec:exp}
While we have shown sharding reduces relative gradient variance, the practical question is whether sharding improves predictions within a training budget. We first compare similarly sized models at comparable aggregate compute, then test the result across optimization seeds and on another held-out corpus. Controlled continuations (independent loss vs. summed) isolate the contribution of independent losses. 
\paragraph{Evaluation.}
Table~\ref{tab:experimental-setup} defines the evaluation sets. Expert validation loss controls learning-rate decay, ensemble validation loss tracks the scaling curves, and test loss scores the held-out comparisons. We route on 256 observed bytes and score the next 768 targets, resetting hidden state between windows. Appendix~\ref{app:independent-evaluation-data} details the test-set overlap checks.
Table~\ref{tab:fixed-parameter-counts} gives model settings.

\paragraph{Distributed training regimes.}
Independent expert losses let us distribute experts across GPUs without exchanging updates. We call this Fully Sharded Optimization (FSO). To increase $n_{\rm pert}$ or $B$ beyond what one GPU can handle and remain as parallelized as possible, Distributed Data and Perturbation Parallelism (DDPP) divides an expert's evaluations across GPUs and combines their gradient estimates before each update. %
Appendix~\ref{app:compute-details} gives implementation and compute details for both regimes. DDPP was critical to implement for fair baseline comparison of large $n_{\rm pert}$ and $B$ runs. 

\paragraph{Training compute at fixed model size.}
Near 8.44M parameters and 150 aggregate GPU-hours, SOMA $N=2$ reaches 1.76 test nats/byte. Monolithic SPSA reaches 2.00, 2.11 and 2.00 with $n_{\rm pert}=64,256,1024$, respectively, and our EGGROLL reproduction reaches 2.21 at similar cost (Figure~\ref{fig:independent-evaluation}). Increasing the monolith's perturbation budget does not close the gap in these runs. Table~\ref{tab:near150-controls} gives the checkpoint update counts and device costs. SOMA $N=2$ completes $2.2\times$ as many updates per GPU-hour as the four-GPU DDPP monolith, because each expert is smaller, executes more efficiently on its GPU, and exchanges nothing. At equal update counts of 30k per expert, SOMA $N=2$ reaches 1.87 test nats/byte against 2.00 for the monolith.

\paragraph{Three-seed tuning comparison.}\label{sec:three-seed-tuning}
The practical comparison also depends on how well each architecture's optimizer is configured. We therefore repeat the learning-rate and $\varepsilon$ search for monolithic SPSA and SOMA $N=2$ across three optimization seeds. Each architecture receives five settings per seed, giving 30 runs with 1.65 aggregate GPU-hours per setting on identical hardware. Seeds change sampled batches and perturbations while holding each architecture's starting checkpoint fixed. Ensemble validation loss selects the settings before test evaluation. SOMA $N=2$ achieves lower test loss in all three seeds, with a mean reduction of 0.0396 nats/byte, a minimum of 0.0358 and a maximum of 0.0462. Both architectures select the lowest learning rate tested. Appendix~\ref{app:early-tuning} reports the search settings and per-seed losses.

\paragraph{External-corpus evaluation.}
To test whether the loss advantage extends beyond FineWeb-Edu, we evaluate the same checkpoints near 150 aggregate GPU-hours on WikiText-103. SOMA $N=2$ reaches 2.07 test nats/byte, compared with 2.25--2.36 for monolithic SPSA and 2.49 for EGGROLL. All models remain frozen and score identical byte targets (Appendix~\ref{app:wikitext-transfer}).

\paragraph{Independent expert losses.}
To isolate whether independent or local learning signal is beneficial versus a summed-loss alternative, we use SOMA $N=4$ architecture, with fixed starting weights, Adam states, data, perturbations and compute budget (number of steps). We compare using individual losses trained disjointly, or the sum of all four losses trained jointly. Independent losses improves test loss by 0.035 nats/byte after 1,000 updates, with improvement in all three random seeds (Figure~\ref{fig:loss-separation}). This isolates the benefit of independent losses on learning at equal compute.

\begin{figure}[!htbp]\centering
\begin{tikzpicture}
\begin{groupplot}[
 group style={group size=2 by 1,horizontal sep=1.55cm},
 width=.375\linewidth,height=3.0cm,scale only axis,
 xlabel={Aggregate GPU-hours},scaled x ticks=false,scaled y ticks=false,
 grid=major,grid style={gray!22},tick pos=left,tick align=outside,
 tick label style={font=\scriptsize},label style={font=\scriptsize},
 title style={font=\small},
 every axis plot/.append style={line width=1.05pt,mark size=2.5pt},
 legend style={draw=none,fill=none,font=\scriptsize,legend columns=2,column sep=8pt},
]
\nextgroupplot[
 legend to name=losscomparisonlegend,
 title={Start at 2,000 updates},ylabel={Test loss (nats/byte)},
 xmin=0,xmax=.16,ymin=2.345,ymax=2.46,
 xtick={0,.05,.10,.15},xticklabel style={/pgf/number format/fixed,/pgf/number format/precision=2},ytick={2.36,2.38,2.40,2.42,2.44,2.46}]
\addplot[cmpSoma,mark=*]
 table[x=local_gpu_hours,y=local_mean,col sep=comma]{figdata/aggregate_gpu_hours/loss_separation.csv};
\addlegendentry{SOMA: 4 Train loops, 4 Shards}
\addplot[cmpSpsa,mark=square*]
 table[x=summed_gpu_hours,y=summed_mean,col sep=comma]{figdata/aggregate_gpu_hours/loss_separation.csv};
\addlegendentry{SPSA: One train loop, 4 Shards}
\nextgroupplot[
 title={Start at 4M updates},
 xmin=0,xmax=5.55,ymin=1.80430,ymax=1.80470,
 xtick={0,2,4},ytick={1.8043,1.8044,1.8045,1.8046,1.8047},
 yticklabel style={/pgf/number format/fixed,/pgf/number format/precision=4,/pgf/number format/fixed zerofill}]
\addplot[cmpSoma,mark=*]
 table[x=local_gpu_hours,y=local_mean,col sep=comma]{figdata/aggregate_gpu_hours/loss_separation_mature.csv};
\addplot[cmpSpsa,mark=square*]
 table[x=summed_gpu_hours,y=summed_mean,col sep=comma]{figdata/aggregate_gpu_hours/loss_separation_mature.csv};
\end{groupplot}
\node[font=\scriptsize] at ($(group c1r1.north)!0.5!(group c2r1.north)+(0,1.25cm)$) {$N=4,\ n_{\rm pert}=B=64$, model size 0.140M};
\node at ($(group c1r1.north)!0.5!(group c2r1.north)+(0,.95cm)$) {\pgfplotslegendfromname{losscomparisonlegend}};
\end{tikzpicture}
\caption{Independent expert losses improve learning, with a smaller but still prevalent gain late in training. SOMA $N=4$ uses independent expert losses or their sum. Within each panel, starting weights, Adam states, data, perturbations and update counts match. Points show mean test loss across three random seeds.  Appendix~\ref{app:loss-separation} gives the protocol.}
\label{fig:loss-separation}
\end{figure}

Independent losses reduce measured variance by factors of 3.19--4.64 on GPU and 3.20--4.65 in FP64, against the small-$\varepsilon$ prediction of four. After 4M updates, independent losses continue to improve test loss by 0.00024 nats/byte over 10,000 further updates, while summed losses worsen it by 0.00007, with independent losses favored in all three pairs. Appendix~\ref{app:loss-separation} gives the per-seed results.

\paragraph{Scaling sweeps.}
We first ask whether adding independent experts improves prediction despite the absence of joint training. At fixed expert width and $n_{\rm pert}=B=64$, increasing $N$ from one to 256 lowers ensemble validation loss from 1.84 to 1.72 after approximately 184 estimated training hours (Figure~\ref{fig:loss_curves}, left). Adding experts increases total parameters and training compute which SOMA can use to monotonically improve in loss without increasing $n_{\rm pert}$ or $B$. We also show that SOMA can benefit from additional $n_{\rm pert}$ or $B$ (Figure~\ref{fig:loss_curves}, center and right, respectively). Figure~\ref{fig:independent-evaluation} and Figure~\ref{fig:isoparam-wallclock} address the separate question of loss at a fixed total model size and fixed training budget, developed in Section~\ref{sec:compute-cost}. SOMA $N=256$ continues to improve from 1.72 ensemble validation nats/byte at 4M updates per expert to 1.71 at 4.16M, with test loss 1.68. SOMA $N=8$ reaches test loss 1.71 using 700 GPU-hours. These results show that independently trained experts can combine into a useful language model.

\paragraph{Other ablations.}
Together with independent losses, we test the role of data partitioning, initialization, head updates, perturbation and batch allocation, and active expert count make six ablation studies. In the BPTT recipe controls, semantically clustered shards outperform random disjoint shards and global IID data. Warm initialization lowers loss, and decoder-only head training retains most of the benefit of updating both paths through the tied head (Figure~\ref{fig:original-ablations}). Appendix~\ref{app:original-ablations} gives the settings and three-seed results. To isolate allocation within an expert, we compare matched SOMA $N=8$ continuations with $(n_{\rm pert},B)=(64,1024),(256,256),(1024,64)$ at the same arithmetic compute. Their seed-mean final test losses differ by less than 0.00006 nats/byte after 512 updates, and the order changes between the two paired seeds (Appendix~\ref{app:allocation-continuation}).

These comparisons establish a training-compute advantage for SOMA $N=2$ among the tested controls and isolate independent losses as a mechanism that improves learning. We next examine why the lowest gradient variance need not identify the best allocation of additional compute.

\setlength{\intextsep}{4pt plus 2pt minus 2pt}
\setlength{\textfloatsep}{6pt plus 2pt minus 2pt}
\section{Allocating additional parallel compute}\label{sec:convergence}\label{sec:compute-cost}
Additional parallel compute can train more independent experts or evaluate more batches and perturbations for each expert update. This choice changes both the learning signal and the number of updates affordable within a budget. We therefore compare what each allocation buys, first in gradient accuracy per global step and then in the loss reached after training. This separates the benefit of a more accurate update from the benefit of completing more updates before a deadline.

The sweeps in Figure~\ref{fig:loss_curves} establish that all three allocations can improve loss at comparable estimated training durations. They vary $N\in\{1,2,8,32,256\}$ at fixed expert width and $n_{\rm pert},B\in\{16,64,256,1024\}$ at $N=8$, with larger effective batches implemented through independent-batch accumulation. Because these settings spend different aggregate compute, their equal-time ranking need not identify the best use of a fixed budget. Appendix~\ref{app:compute-details} gives the operation counts and training-time accounting used to make that distinction.

Figure~\ref{fig:variance-efficiency} compares relative gradient variance with measured aggregate GPU-seconds per global step. The $N$ curve uses separate 10,000-update ensembles of roughly 8.44M parameters. The $n_{\rm pert}$ and $B$ curves share one $N=1$ checkpoint and reference gradient. Each architecture is evaluated against its own reference gradient.
For expert $e$, let $g_{e,\rm ref}$ be the BPTT gradient on a large independent reference batch and $\widehat g_e$ the ZO estimate on a sampled batch. The relative gradient variance $R$ averages over sampled batches and perturbations, holding the reference fixed,
\begin{equation}
R=\frac{\sum_{e=1}^{N}\E\|\widehat g_e-g_{e,\rm ref}\|^2}
{\sum_{e=1}^{N}\|g_{e,\rm ref}\|^2}.
\label{eq:main-reference-error}
\end{equation}
This measures total gradient error, including batch variation, perturbation variation and bias relative to the reference. We sum errors across experts before dividing by their summed squared gradient norms.

We pair this error with the aggregate GPU time needed to update every expert once, including communication and waiting. Appendix~\ref{app:variance-efficiency} gives the reference batches, estimator and timing procedure.
\begin{figure}[!tbp]\centering
{\small Model size $\simeq8.44$M, starting at $N=1$, $n_{\rm pert}=B=64$}\\[1pt]
\begin{tikzpicture}
\begin{axis}[name=trainedStage,width=11.5cm,height=6.5cm,scale only axis,xmode=log,ymode=log,
xmin=3.5,xmax=606.16250583,ymin=120.309186218,ymax=300000,
xtick={10,30,100,300},xticklabels={10,30,100,300},scaled ticks=false,
grid=major,grid style={black!10},tick align=outside,tick pos=left,clip=false,
tick label style={font=\small},label style={font=\small},
ylabel={Relative gradient estimation error},xlabel={Aggregate GPU-seconds per global step},
legend columns=1,legend style={at={(0.98,0.03)},anchor=south east,draw=none,fill=white,font=\small,cells={anchor=west},row sep=2pt}]
\addplot[secTeal,thick,mark=diamond,mark size=2.6pt,mark options={solid,fill=white}] table[x=aggregate_GPU_seconds,y=relative_mse,col sep=comma]{figdata/population_reference_20260909/tenk_N.csv};
\addlegendentry{Shards $N$ (FSO)}
\addplot[secBlue,thick,mark=square*,mark size=2pt] table[x=aggregate_GPU_seconds,y=relative_mse,col sep=comma]{figdata/population_reference_20260909/tenk_npert.csv};
\addlegendentry{Perturbations $n_{\rm pert}$ (DDPP)}
\addplot[somaOrange,thick,mark=square*,mark size=2pt] table[x=aggregate_GPU_seconds,y=relative_mse,col sep=comma]{figdata/population_reference_20260909/tenk_B.csv};
\addlegendentry{Batch size $B$ (DDPP)}
\pgfplotstableread[col sep=comma]{figdata/population_reference_20260909/tenk_N.csv}\tenkNTable
\pgfplotstablegetelem{0}{aggregate_GPU_seconds}\of\tenkNTable\expandafter\let\csname tenkNAX\endcsname\pgfplotsretval
\pgfplotstablegetelem{0}{relative_mse}\of\tenkNTable\expandafter\let\csname tenkNAY\endcsname\pgfplotsretval
\pgfplotstablegetelem{0}{label}\of\tenkNTable\expandafter\let\csname tenkNALabel\endcsname\pgfplotsretval
\pgfplotstablegetelem{0}{gpu_label}\of\tenkNTable\expandafter\let\csname tenkNAGPU\endcsname\pgfplotsretval
\pgfplotstablegetelem{0}{expert_size_label}\of\tenkNTable\expandafter\let\csname tenkNASize\endcsname\pgfplotsretval
\node[font=\scriptsize,align=right,anchor=south east,inner sep=.7pt,fill=white,fill opacity=.9,text opacity=1,xshift=-2mm,yshift=1mm] (tenkNANode) at (axis cs:\tenkNAX,\tenkNAY) {$\tenkNALabel$\\\tenkNAGPU\\\tenkNASize};
\pgfplotstablegetelem{1}{aggregate_GPU_seconds}\of\tenkNTable\expandafter\let\csname tenkNBX\endcsname\pgfplotsretval
\pgfplotstablegetelem{1}{relative_mse}\of\tenkNTable\expandafter\let\csname tenkNBY\endcsname\pgfplotsretval
\pgfplotstablegetelem{1}{label}\of\tenkNTable\expandafter\let\csname tenkNBLabel\endcsname\pgfplotsretval
\pgfplotstablegetelem{1}{gpu_label}\of\tenkNTable\expandafter\let\csname tenkNBGPU\endcsname\pgfplotsretval
\pgfplotstablegetelem{1}{expert_size_label}\of\tenkNTable\expandafter\let\csname tenkNBSize\endcsname\pgfplotsretval
\node[font=\scriptsize,align=right,anchor=east,inner sep=.7pt,fill=white,fill opacity=.9,text opacity=1,xshift=-2mm,yshift=0mm] (tenkNBNode) at (axis cs:\tenkNBX,\tenkNBY) {$\tenkNBLabel$\\\tenkNBGPU\\\tenkNBSize};
\pgfplotstablegetelem{2}{aggregate_GPU_seconds}\of\tenkNTable\expandafter\let\csname tenkNCX\endcsname\pgfplotsretval
\pgfplotstablegetelem{2}{relative_mse}\of\tenkNTable\expandafter\let\csname tenkNCY\endcsname\pgfplotsretval
\pgfplotstablegetelem{2}{label}\of\tenkNTable\expandafter\let\csname tenkNCLabel\endcsname\pgfplotsretval
\pgfplotstablegetelem{2}{gpu_label}\of\tenkNTable\expandafter\let\csname tenkNCGPU\endcsname\pgfplotsretval
\pgfplotstablegetelem{2}{expert_size_label}\of\tenkNTable\expandafter\let\csname tenkNCSize\endcsname\pgfplotsretval
\node[font=\scriptsize,align=right,anchor=north east,inner sep=.7pt,fill=white,fill opacity=.9,text opacity=1,xshift=-2mm,yshift=-1mm] (tenkNCNode) at (axis cs:\tenkNCX,\tenkNCY) {$\tenkNCLabel$\\\tenkNCGPU\\\tenkNCSize};
\pgfplotstablegetelem{3}{aggregate_GPU_seconds}\of\tenkNTable\expandafter\let\csname tenkNDX\endcsname\pgfplotsretval
\pgfplotstablegetelem{3}{relative_mse}\of\tenkNTable\expandafter\let\csname tenkNDY\endcsname\pgfplotsretval
\pgfplotstablegetelem{3}{label}\of\tenkNTable\expandafter\let\csname tenkNDLabel\endcsname\pgfplotsretval
\pgfplotstablegetelem{3}{gpu_label}\of\tenkNTable\expandafter\let\csname tenkNDGPU\endcsname\pgfplotsretval
\pgfplotstablegetelem{3}{expert_size_label}\of\tenkNTable\expandafter\let\csname tenkNDSize\endcsname\pgfplotsretval
\node[font=\scriptsize,align=right,anchor=east,inner sep=.7pt,fill=white,fill opacity=.9,text opacity=1,xshift=-2mm,yshift=0mm] (tenkNDNode) at (axis cs:\tenkNDX,\tenkNDY) {$\tenkNDLabel$\\\tenkNDGPU\\\tenkNDSize};
\pgfplotstableread[col sep=comma]{figdata/population_reference_20260909/tenk_npert.csv}\tenknpertTable
\pgfplotstablegetelem{1}{aggregate_GPU_seconds}\of\tenknpertTable\expandafter\let\csname tenknpertBX\endcsname\pgfplotsretval
\pgfplotstablegetelem{1}{relative_mse}\of\tenknpertTable\expandafter\let\csname tenknpertBY\endcsname\pgfplotsretval
\pgfplotstablegetelem{1}{label}\of\tenknpertTable\expandafter\let\csname tenknpertBLabel\endcsname\pgfplotsretval
\pgfplotstablegetelem{1}{gpu_label}\of\tenknpertTable\expandafter\let\csname tenknpertBGPU\endcsname\pgfplotsretval
\node[font=\scriptsize,align=right,anchor=north east,inner sep=.7pt,fill=white,fill opacity=.9,text opacity=1,xshift=-1.5mm,yshift=-1.5mm] (tenknpertBNode) at (axis cs:\tenknpertBX,\tenknpertBY) {$\tenknpertBLabel$\\\tenknpertBGPU};
\pgfplotstablegetelem{2}{aggregate_GPU_seconds}\of\tenknpertTable\expandafter\let\csname tenknpertCX\endcsname\pgfplotsretval
\pgfplotstablegetelem{2}{relative_mse}\of\tenknpertTable\expandafter\let\csname tenknpertCY\endcsname\pgfplotsretval
\pgfplotstablegetelem{2}{label}\of\tenknpertTable\expandafter\let\csname tenknpertCLabel\endcsname\pgfplotsretval
\pgfplotstablegetelem{2}{gpu_label}\of\tenknpertTable\expandafter\let\csname tenknpertCGPU\endcsname\pgfplotsretval
\node[font=\scriptsize,align=right,anchor=north east,inner sep=.7pt,fill=white,fill opacity=.9,text opacity=1,xshift=-1.5mm,yshift=-1.5mm] (tenknpertCNode) at (axis cs:\tenknpertCX,\tenknpertCY) {$\tenknpertCLabel$\\\tenknpertCGPU};
\pgfplotstablegetelem{3}{aggregate_GPU_seconds}\of\tenknpertTable\expandafter\let\csname tenknpertDX\endcsname\pgfplotsretval
\pgfplotstablegetelem{3}{relative_mse}\of\tenknpertTable\expandafter\let\csname tenknpertDY\endcsname\pgfplotsretval
\pgfplotstablegetelem{3}{label}\of\tenknpertTable\expandafter\let\csname tenknpertDLabel\endcsname\pgfplotsretval
\pgfplotstablegetelem{3}{gpu_label}\of\tenknpertTable\expandafter\let\csname tenknpertDGPU\endcsname\pgfplotsretval
\node[font=\scriptsize,align=right,anchor=north east,inner sep=.7pt,fill=white,fill opacity=.9,text opacity=1,xshift=-1.5mm,yshift=-1.5mm] (tenknpertDNode) at (axis cs:\tenknpertDX,\tenknpertDY) {$\tenknpertDLabel$\\\tenknpertDGPU};
\pgfplotstableread[col sep=comma]{figdata/population_reference_20260909/tenk_B.csv}\tenkBTable
\pgfplotstablegetelem{1}{aggregate_GPU_seconds}\of\tenkBTable\expandafter\let\csname tenkBBX\endcsname\pgfplotsretval
\pgfplotstablegetelem{1}{relative_mse}\of\tenkBTable\expandafter\let\csname tenkBBY\endcsname\pgfplotsretval
\pgfplotstablegetelem{1}{label}\of\tenkBTable\expandafter\let\csname tenkBBLabel\endcsname\pgfplotsretval
\pgfplotstablegetelem{1}{gpu_label}\of\tenkBTable\expandafter\let\csname tenkBBGPU\endcsname\pgfplotsretval
\node[font=\scriptsize,align=center,anchor=south,inner sep=.7pt,fill=white,fill opacity=.9,text opacity=1,xshift=0mm,yshift=2.5mm] (tenkBBNode) at (axis cs:\tenkBBX,\tenkBBY) {$\tenkBBLabel$\\\tenkBBGPU};
\pgfplotstablegetelem{2}{aggregate_GPU_seconds}\of\tenkBTable\expandafter\let\csname tenkBCX\endcsname\pgfplotsretval
\pgfplotstablegetelem{2}{relative_mse}\of\tenkBTable\expandafter\let\csname tenkBCY\endcsname\pgfplotsretval
\pgfplotstablegetelem{2}{label}\of\tenkBTable\expandafter\let\csname tenkBCLabel\endcsname\pgfplotsretval
\pgfplotstablegetelem{2}{gpu_label}\of\tenkBTable\expandafter\let\csname tenkBCGPU\endcsname\pgfplotsretval
\node[font=\scriptsize,align=center,anchor=south,inner sep=.7pt,fill=white,fill opacity=.9,text opacity=1,xshift=0mm,yshift=2.5mm] (tenkBCNode) at (axis cs:\tenkBCX,\tenkBCY) {$\tenkBCLabel$\\\tenkBCGPU};
\pgfplotstablegetelem{3}{aggregate_GPU_seconds}\of\tenkBTable\expandafter\let\csname tenkBDX\endcsname\pgfplotsretval
\pgfplotstablegetelem{3}{relative_mse}\of\tenkBTable\expandafter\let\csname tenkBDY\endcsname\pgfplotsretval
\pgfplotstablegetelem{3}{label}\of\tenkBTable\expandafter\let\csname tenkBDLabel\endcsname\pgfplotsretval
\pgfplotstablegetelem{3}{gpu_label}\of\tenkBTable\expandafter\let\csname tenkBDGPU\endcsname\pgfplotsretval
\node[font=\scriptsize,align=center,anchor=south,inner sep=.7pt,fill=white,fill opacity=.9,text opacity=1,xshift=0mm,yshift=2.5mm] (tenkBDNode) at (axis cs:\tenkBDX,\tenkBDY) {$\tenkBDLabel$\\\tenkBDGPU};
\addplot[secTeal,only marks,forget plot,mark=diamond,mark size=2.6pt,mark options={solid,fill=white}] table[x=aggregate_GPU_seconds,y=relative_mse,col sep=comma,row predicate/.code={\ifnum\pgfplotstablerow>0\relax\pgfplotstableuserowfalse\fi}]{figdata/population_reference_20260909/tenk_N.csv};
\end{axis}
\end{tikzpicture}
\caption{Sharding and perturbation averaging lower gradient error. Larger batches also help. The $N$ curve uses separate ensembles after 10,000 updates. The other curves reuse the same $N=1$ checkpoint. Error is estimated from measured perturbations against each expert's BPTT reference on separate data, summed across experts and divided by the summed squared reference norms. FSO cost sums separately timed expert updates. DDPP cost includes all allocated GPUs, communication and waiting. Appendix~\ref{app:variance-efficiency} gives the full calculation. Model sizes follow Section~\ref{sec:model-size}.}
\label{fig:variance-efficiency}
\end{figure}

Increasing $n_{\rm pert}$ from 64 to 1024 lowers relative error by 94\%, while increasing $B$ over the same range lowers it by 47\%. Larger batches reduce data-sampling error, and more perturbations reduce error from the finite set of directions. Sharding changes the estimation problem itself.  At fewer aggregate GPU-seconds per global step compared to $n_{\rm pert}=1024$ or $B=1024$, $N=256$ at $n_{\rm pert}=B=64$ wins at 99.5\% lower relative gradient variance than the monolithic model. Note, despite $N=256$ FSO using 256 GPUs, there is no communication between GPUs during training, so steps can be much faster. Also note that $N=2$ is faster per step than $N=1$ because each model is smaller and executes more efficiently on the GPU.

Adding fixed-width experts increases total model size without enlarging each local estimation problem and monotonically lowers loss across the tested expert counts at comparable estimated wall-clock times (Figure~\ref{fig:loss_curves}). At similar model size, modest sharding favors training compute efficiency, while heavier sharding offers a separate inference benefit. Inference throughput is non-monotonic because smaller experts do not necessarily execute more efficiently on the GPU. For example, SOMA $N=256$ achieves substantially higher throughput than SOMA $N=8$ (Figure~\ref{fig:isoparam-wallclock}). SOMA $N=2$ uses all experts at inference, while SOMA $N=256$ uses top-$k$ routing with $k=4$. We compare test loss within a training budget to determine whether improved estimation offsets the reduction in active model size.

With aggregate budget $C$ and deadline $D$, let $u$ and $\tau$ be the measured aggregate cost and elapsed time per global step. The maximum number of global steps $s_{\max}$ is
\begin{equation}
s_{\max}(N,n_{\rm pert},B)=\left\lfloor\min\left\{\frac{C}{u(N,n_{\rm pert},B)},\frac{D}{\tau(N,n_{\rm pert},B)}\right\}\right\rfloor.
\label{eq:main-budget-deadline}
\end{equation}
We choose the lowest measured loss among configurations that fit the available hardware and have coverage through $s_{\max}$.

A deadline can favor more shards. Near 8.44M parameters, $N=256$ reaches ensemble validation loss 1.94 in an estimated 16.8 hours, versus 1.99 in 17.3 hours for $N=2$, using 4.30k rather than 34.6 GPU-hours (Figure~\ref{fig:deadline-allocation}). More parallel hardware gives lower loss at similar elapsed time, though greater aggregate cost.

Perturbation averaging remains useful within an expert. For SOMA $N=8$ at fixed expert width, increasing $n_{\rm pert}$ from 64 to 1024 reaches ensemble validation loss 1.69 rather than 1.77, using 1.07k rather than 1.42k GPU-hours (Figure~\ref{fig:loss_curves}, Appendix~\ref{app:experimental-details}). The training curves favor enough decomposition to improve estimation while keeping each expert large enough to predict well. Inference introduces a further constraint because additional experts need not all be evaluated for each prediction.

\section{Active-parameter inference analysis}\label{sec:active-parameter-inference}
Training updates all experts, but inference executes only the selected experts. Figure~\ref{fig:isoparam-wallclock} compares the resulting training and inference trade-offs at approximately fixed total model size. At 8.44M total parameters with top-$k$ routing ($k=4$), SOMA $N=256$ achieves 2.36M tokens/s versus 257k for SOMA $N=8$ ($9.19\times$, including routing), at 1.68 test loss versus 1.71, albeit with SOMA $N=256$ using $59.9\times$ as much aggregate training compute. This shows us our tradeoff. We can invest more into training cost to reduce inference costs by sharding more. 

Separately, we can select the number of active experts $k$ in our inference ensemble on the fly without any retraining, improving model performance up to a point. As shown in Figure~\ref{fig:topk-flops-frontier}, we sweep fixed-width experts from $k=1$ to $N$ at the same wall clock. At $k=4$, SOMA $N=256$ reaches ensemble validation loss 1.72 versus 1.78 for SOMA $N=8$.For SOMA $N=256$, increasing $k$ from 4 to 12 lowers ensemble validation loss by 0.013 nats/byte at three times the expert compute. Evaluating all 256 experts costs more and gives higher loss. Experts with less similar errors benefit more from averaging (Figure~\ref{fig:n256-correlation}).

\begin{figure}[H]
\centering
{\small Approximately 8.44M total parameters ($n_{\rm pert}=64$, $B=64$)}\par\smallskip
\begin{tikzpicture}
\begin{groupplot}[group style={group size=3 by 1,horizontal sep=.45cm},
  width=3.65cm,height=4.10cm,scale only axis,
  xmode=log,log basis x=10,
  ymajorgrids,xmajorgrids,xminorgrids,grid style={gray!25,very thin},
  tick align=outside,tick pos=left,
  every axis label/.style={font=\scriptsize},tick label style={font=\scriptsize},
  title style={font=\small},
  legend columns=4,
  legend style={font=\scriptsize,draw=none,fill=none,cells={anchor=west},/tikz/column sep=18pt},
  legend cell align=left,/tikz/line width=.9pt,ymin=1.67,ymax=2.35]
\nextgroupplot[legend to name=figfourteenlegend,xmin=.5,xmax=110,xtick={1,10,100},
  xlabel={\shortstack{Training wall-clock \\(hours)}},
  ylabel={\shortstack{Ensemble validation loss\\(nats/byte)}}]
\addplot[somaA,mark=*,mark size=1pt] table[x=wall_clock_hr,y=loss,col sep=comma]{figdata/rebuttal/figure13_wallclock_N1.csv};\addlegendentry{$N=1$}
\addplot[somaB,mark=*,mark size=1pt] table[x=wall_clock_hr,y=loss,col sep=comma]{figdata/rebuttal/figure13_wallclock_N2.csv};\addlegendentry{$N=2$}
\addplot[somaC,mark=*,mark size=1pt] table[x=wall_clock_hr,y=loss,col sep=comma]{figdata/rebuttal/figure13_wallclock_N8.csv};\addlegendentry{$N=8$}
\addplot[somaE,mark=*,mark size=1pt] table[x=wall_clock_hr,y=loss,col sep=comma]{figdata/rebuttal/figure13_wallclock_N256.csv};\addlegendentry{$N=256$}
\nextgroupplot[xmin=1,xmax=18000,xtick={1,100,10000},yticklabels={},
  xlabel={Aggregate GPU-hours}]
\addplot[somaA,mark=*,mark size=1pt] table[x=aggregate_gpu_hours,y=loss,col sep=comma]{figdata/rebuttal/figure13_wallclock_N1.csv};
\addplot[somaB,mark=*,mark size=1pt] table[x=aggregate_gpu_hours,y=loss,col sep=comma]{figdata/rebuttal/figure13_wallclock_N2.csv};
\addplot[somaC,mark=*,mark size=1pt] table[x=aggregate_gpu_hours,y=loss,col sep=comma]{figdata/rebuttal/figure13_wallclock_N8.csv};
\addplot[somaE,mark=*,mark size=1pt] table[x=aggregate_gpu_hours,y=loss,col sep=comma]{figdata/rebuttal/figure13_wallclock_N256.csv};
\nextgroupplot[xmin=40000,xmax=4000000,xtick={100000,1000000},yticklabels={},xlabel={\shortstack{Inference throughput\\(tokens/s)}}]
\addplot[somaA,only marks,mark=*,mark size=2pt] coordinates {(555149.690690,2.01445746)};
\node[font=\scriptsize,anchor=south,yshift=4pt] at (axis cs:555149.690690,2.01445746) {$N=1$};
\addplot[somaB,only marks,mark=*,mark size=2pt] coordinates {(467776.362624,1.78445923)};
\node[font=\scriptsize,anchor=north,yshift=-4pt] at (axis cs:467776.362624,1.78445923) {$N=2$};
\addplot[somaC,only marks,mark=*,mark size=2pt] coordinates {(257078.173391,1.8125267)};
\node[font=\scriptsize,anchor=south,yshift=4pt] at (axis cs:257078.173391,1.8125267) {$N=8$};
\addplot[somaE,only marks,mark=*,mark size=2pt] coordinates {(2362416.389133,1.79222798)};
\node[font=\scriptsize,anchor=south,xshift=-5pt,yshift=4pt] at (axis cs:2362416.389133,1.79222798) {$N=256$};
\end{groupplot}
\node[anchor=south,yshift=6pt] at ($(group c1r1.north west)!0.5!(group c3r1.north east)$) {\pgfplotslegendfromname{figfourteenlegend}};
\end{tikzpicture}
\caption{SOMA at approximately fixed total model size (8.44M parameters). Left and middle show ensemble validation loss against training wall-clock and aggregate GPU-hours. Right pairs each run's lowest measured loss with inference throughput for its architecture, including routing. Modest sharding favors training compute efficiency, while SOMA $N=256$ offers higher inference throughput. Endpoint training budgets differ. Appendices~\ref{app:figure2-controls} and~\ref{app:inference-benchmarks} give the training accounting and inference protocol.}
\label{fig:isoparam-wallclock}
\end{figure}

At approximately equal total model size, throughput including routing is 0.555M, 0.468M, 0.257M and 2.36M tokens/s for $N=1,2,8,256$, respectively (Figure~\ref{fig:isoparam-wallclock}, right, and Table~\ref{tab:inference-full}). Thus, inference throughput is non-monotonic in sharding degree. For SOMA $N=8$, recurrent execution, normalization and dispatch padding offset the active-parameter saving. Fewer parameters do not imply proportionally less runtime (Appendix~\ref{app:inference-benchmarks}). With top-$k$ routing at $k=4$, SOMA $N=256$ instead reaches $4.26\times$ the monolith's throughput and $9.19\times$ that of SOMA $N=8$, with both models using optimized grouped Triton kernels. Heavier sharding therefore offers an inference benefit distinct from training-compute efficiency.

\section{Limitations and conclusion}
In this paper, we show the benefits of sharding when using zero-order optimization. At approximately fixed total model size and training compute, SOMA $N=2$ outperforms the tested monolithic ZO controls. Controlled experiments support independent losses as a mechanism to reduce relative gradient error and loss. Heavier sharding offers a separate inference benefit, with SOMA $N=256$ providing $9.19\times$ the inference throughput of approximately equally sized SOMA $N=8$ under top-$k$ routing at $k=4$. Architectural independence therefore offers distinct training and inference benefits at the cost of joint learning across domains. As new inference-friendly hardware begins to popularize, we hope this work inspires others to adopt ZO and train novel, perhaps non-differentiable architectures.

This ZO pretraining study uses one byte-level corpus and small recurrent experts, limited to 8.44M parameters. Each expert sees only its own cluster, so more sharding trades less data per expert for a larger total model. At $N=256$, each expert processes the equivalent of about 700 passes over its training shard (Appendix~\ref{app:experimental-details}). While the most important runs were multi-seed, most scaling runs were single seed runs. %
Appendix~\ref{app:limitations} discusses further limitations.

\label{sec:main-end}
\setlength{\intextsep}{12pt plus 2pt minus 2pt}
\setlength{\textfloatsep}{20pt plus 2pt minus 4pt}

\clearpage
\section*{AI use statement}
Generative AI tools assisted with validating theoretical analysis, claims and proofs, implementing training and evaluation code, and in reviewing the manuscript. The authors take full responsibility for the final text, claims, code and results.

\section*{Reproducibility statement}
We release all code and checkpoints for full reproducibility. The appendices and zip file contain all guidance on how to reproduce all trainings and evaluations. The accompanying source includes the plotted CSV data. 

\clearpage
\flushbottom
\bibliographystyle{iclr2027_conference}
\bibliography{references}
\clearpage
\appendix
\raggedbottom
\section{Scaling sweeps and timing}\label{app:experimental-details}\label{app:total-device-time}
The scaling sweeps ask whether additional experts, perturbations or data per update improve prediction when more parallel compute is available. To interpret their learning curves, we distinguish elapsed training time (Wall-clock hours) from work summed across devices (Aggregate GPU-hours). This appendix records the common settings, evaluation sets and timing procedure for Figure~\ref{fig:loss_curves}.

\begingroup
\setlength{\intextsep}{4pt}
\begin{figure}[!htbp]\small\centering
\begin{tikzpicture}
\begin{groupplot}[group style={group size=3 by 1,horizontal sep=.45cm,ylabels at=edge left},
somapanel,height=4.10cm,xmin=.015,xmax=300,ymin=1.67,ymax=2.65,
xtick={.1,1,10,100},
xlabel={Wall-clock hours},ylabel={\shortstack{Ensemble validation loss\\(nats/byte)}}]
\nextgroupplot[xmin=.3,xmax=260,title={\shortstack{Number of experts ($N$)\\($n_{\rm pert}=64$, $B=64$)}}]
\addplot[somaA,mark repeat=8,mark size=1pt] table[x=wall_clock_hr,y=val,col sep=comma]{figdata/figure1_left_equal_time/N1.csv};\addlegendentry{$N=1$}
\addplot[somaB,mark repeat=8,mark size=1pt] table[x=wall_clock_hr,y=val,col sep=comma]{figdata/figure1_left_equal_time/N2.csv};\addlegendentry{$N=2$}
\addplot[somaC,mark repeat=8,mark size=1pt] table[x=wall_clock_hr,y=val,col sep=comma]{figdata/figure1_left_equal_time/N8.csv};\addlegendentry{$N=8$}
\addplot[somaD,mark repeat=8,mark size=1pt] table[x=wall_clock_hr,y=val,col sep=comma]{figdata/figure1_left_equal_time/N32.csv};\addlegendentry{$N=32$}
\addplot[somaE,mark repeat=8,mark size=1pt] table[x=wall_clock_hr,y=val,col sep=comma]{figdata/figure1_left_equal_time/N256.csv};\addlegendentry{$N=256$}
\nextgroupplot[title={\shortstack{Perturbations ($n_{\rm pert}$)\\($N=8$, $B=64$)}},yticklabels=\empty,ytick style={draw=none}]
\addplot[somaA,mark repeat=8,mark size=1pt] table[x=wall_clock_hr,y=val,col sep=comma]{figdata/measured_step_time/np16.csv};\addlegendentry{$n_{\rm pert}=16$}
\addplot[somaB,mark repeat=8,mark size=1pt] table[x=wall_clock_hr,y=val,col sep=comma]{figdata/measured_step_time/N8.csv};\addlegendentry{$n_{\rm pert}=64$}
\addplot[somaC,mark repeat=8,mark size=1pt] table[x=wall_clock_hr,y=val,col sep=comma]{figdata/measured_step_time/np256.csv};\addlegendentry{$n_{\rm pert}=256$}
\addplot[somaD,mark repeat=8,mark size=1pt] table[x=wall_clock_hr,y=val,col sep=comma]{figdata/measured_step_time/np1024.csv};\addlegendentry{$n_{\rm pert}=1024$}
\nextgroupplot[title={\shortstack{Effective batch size ($B$)\\($N=8$, $n_{\rm pert}=64$)}},yticklabels=\empty,ytick style={draw=none}]
\addplot[somaA,mark repeat=8,mark size=1pt] table[x=wall_clock_hr,y=val,col sep=comma]{figdata/measured_step_time/b16.csv};\addlegendentry{$B=16$}
\addplot[somaB,mark repeat=8,mark size=1pt] table[x=wall_clock_hr,y=val,col sep=comma]{figdata/measured_step_time/N8.csv};\addlegendentry{$B=64$}
\addplot[somaC,mark repeat=8,mark size=1pt] table[x=wall_clock_hr,y=val,col sep=comma]{figdata/measured_step_time/ac4.csv};\addlegendentry{$B=256$}
\addplot[somaD,mark repeat=8,mark size=1pt] table[x=wall_clock_hr,y=val,col sep=comma]{figdata/measured_step_time/ac16.csv};\addlegendentry{$B=1024$}
\end{groupplot}
\end{tikzpicture}
\caption{SOMA sweeps on logarithmic wall-clock axes. Left, Fixed-width experts scaling from $N\in\{1,2,8,32,256\}$, shown through a common cutoff of 183.6 estimated hours. Increasing number of experts (and ensemble size) improves training monotonically. Center, $n_{\rm pert}\in\{16,64,256,1024\}$. Right, effective batch $B\in\{16,64,256,1024\}$. The $B=256,1024$ runs accumulate four and sixteen batches of 64, each with 64 independently sampled directions, giving 256 and 1,024 directions per update. SOMA $N=8$ improves loss with more $n_{\rm pert}$ and $B$. Settings are in Table~\ref{tab:experimental-setup} and Appendix~\ref{app:experimental-details}.
}
\label{fig:loss_curves}
\end{figure}
\endgroup

\begin{table}[H]\centering\small
\caption{Experimental settings and evaluation sets.}
\label{tab:experimental-setup}
\begin{center}    
\begin{tabular}{@{}p{0.28\linewidth}p{0.67\linewidth}@{}}
\toprule
Quantity & Configuration \\
\midrule
Training corpus / tokenizer & 100B FineWeb-Edu bytes / byte vocabulary $V=256$ \\
Context length & 1,024 tokens \\
Expert validation set & The 1\% split within each expert's shard. The other 99\% is used for training. Expert validation loss controls learning-rate decay. \\
Ensemble validation set & 976 windows from 202 FineWeb-Edu documents, used for scaling curves. Scores are ensemble validation loss. \\
Test set & 4,096 FineWeb-Edu documents screened for training overlap. Each contributes one 1,025-byte window. Scores are test loss. Construction and exclusions are in Appendix~\ref{app:independent-evaluation-data}. \\
Evaluation protocol & On the ensemble validation and test sets, routing uses the first 256 input bytes and loss scores the next 768 targets. Both losses are mean cross entropy in nats/byte. \\
Expert body & width $d_t=32$, $D=2$ layers, 32.9k body parameters \\
Shared tied head & $d_E=32$, 8.19k parameters, trained in seed and then frozen \\
Baseline SPSA & $n_{\rm pert}=64$, $B=64$, probe density 0.5 \\
Perturbation / learning rate & initial $\varepsilon=\mathrm{lr}=10^{-3}$, decayed together on expert validation plateaus \\
Adam moments / weight decay & $(\beta_1,\beta_2)=(0.9,0.999)$ / $10^{-4}$ \\
Random seed / training precision & 1 / FP32 weights, TF32 kernels \\
Plateau rule / floor & halve after 1,000 updates without improvement of $10^{-8}$ / $10^{-5}$  \\
Expert-count sweep & $N\in\{1,2,8,32,256\}$ \\
Perturbation sweep & $n_{\rm pert}\in\{16,64,256,1024\}$ at $N=8,B=64$ \\
Effective-batch sweep & $B\in\{16,64,256,1024\}$ at $N=8,n_{\rm pert}=64$ \\
Training placement & one independent expert per RTX~5090 \\
\bottomrule
\end{tabular}
\end{center}
\end{table}

We vary $N$, $n_{\rm pert}$, and $B$ as listed in Table~\ref{tab:experimental-setup}. Figure~\ref{fig:loss_curves} displays the primary sweeps through 1024. The default setting is $N=8$, $n_{\rm pert}=64$, and $B=64$. At four million updates per expert, increasing $N$ from 1 to 256 lowers ensemble validation loss from 1.86 to 1.72. The $N=1$ curve continues beyond four million updates to 1.84 at the approximately 184-hour cutoff used in Section~\ref{sec:exp}.
The best endpoint in that figure, $N=8$ with $n_{\rm pert}=1024$, reaches 1.69 nats/byte. Table~\ref{tab:experimental-setup}
gives the full setup. These runs have different update budgets and no recorded common stopping rule. Their ordering could change with further training.

We use the fixed ensemble validation set and the protocol in Table~\ref{tab:experimental-setup}. Hidden state is reset between chunks. The same evaluation applies to $N=1$ and optimizer baselines. Section~\ref{sec:independent-evaluation} evaluates frozen ensembles on the test set. The training sweeps and LSTM controls are re-evaluated in CPU fp32 from archived weights, using exact GELU. The N=256 endpoint agrees with the historical GPU score within $10^{-4}$ nats/byte. The fixed-parameter SOMA curves use the recovered training evaluator.

The 100B figure is full corpus size that is sharded. Each expert consumes $100B/N$ bytes. At 4.16M updates, $B=64$ and $T=1024$ give $2.73\times10^{11}$ bytes per expert. For a balanced 99\%-training split over 256 shards, that is approximately 705 passes over a 387M-byte training shard.

\subsection{Wall-clock time}
To compare progress over a training duration, we place the recorded loss measurements on an estimated elapsed-time axis. For Figure~\ref{fig:loss_curves}, checkpoint updates are converted to training hours using measured mean seconds per step from the matching W\&B runs. We use the slowest expert's mean for each ensemble. The timing samples cover the plotted training interval after the first 1,000 updates and exclude later replays of earlier checkpoints. This estimates training time at the observed rate without adding deployment delays or offline interruptions. All curves use recorded ensemble validation losses. No loss is interpolated or fitted. The historical ensemble validation values agree with 33 matching checkpoint re-evaluations within $0.0001$ nats/byte.

The measured seconds per step are $0.138, 0.169, 0.145, 0.157, 0.165$ for $N=1,2,8,32,256$, respectively. At $N=8$, they are $0.063, 0.145, 0.192, 0.409$ for $n_{\rm pert}=16,64,256,1024$ and $0.035, 0.145, 0.221, 0.819$ for $B=16,64,256,1024$. The larger batches use accumulation, as described in Appendix~\ref{app:training-details}.

\subsection{Aggregate training work}
Adding experts at fixed width grows both total model size and work per update. We compare their learning curves against aggregate GPU-hours, summing the recorded training time across experts.

\begin{figure}[H]\centering
\begin{tikzpicture}
\begin{axis}[
width=.84\linewidth,height=5.3cm,scale only axis,xmode=log,
 title={Expert width $32$, $n_{\rm pert}=B=64$},title style={font=\small,yshift=22pt},
xlabel={Aggregate GPU-hours},ylabel={\shortstack{Ensemble validation loss\\(nats/byte)}},
ymin=1.67,ymax=2.63,grid=major,grid style={gray!22},
tick label style={font=\small},label style={font=\small},
legend style={font=\scriptsize,draw=none,legend columns=5,at={(.5,1.03)},anchor=south}]
\addplot[somaA,thick,mark=*,mark size=1.5pt] table[x=aggregate_gpu_hours,y=val,col sep=comma]{figdata/aggregate_gpu_hours/N1.csv};\addlegendentry{$N=1$}
\addplot[somaB,thick,mark=*,mark size=1.5pt] table[x=aggregate_gpu_hours,y=val,col sep=comma]{figdata/aggregate_gpu_hours/N2.csv};\addlegendentry{$N=2$}
\addplot[somaC,thick,mark=*,mark size=1.5pt] table[x=aggregate_gpu_hours,y=val,col sep=comma]{figdata/aggregate_gpu_hours/N8.csv};\addlegendentry{$N=8$}
\addplot[somaD,thick,mark=*,mark size=1.5pt] table[x=aggregate_gpu_hours,y=val,col sep=comma]{figdata/aggregate_gpu_hours/N32.csv};\addlegendentry{$N=32$}
\addplot[somaE,thick,mark=*,mark size=1.5pt] table[x=aggregate_gpu_hours,y=val,col sep=comma]{figdata/aggregate_gpu_hours/N256.csv};\addlegendentry{$N=256$}
\end{axis}
\end{tikzpicture}
\caption{More experts do not lower loss at every training budget.
These are the expert-count measurements on the ensemble validation set from Figure~\ref{fig:loss_curves}, plotted against aggregate GPU-hours estimated from the recorded mean step times across all experts. Expert width is fixed and $n_{\rm pert}=B=64$. Greater $N$ adds parameters and costs more per update. Section~\ref{sec:independent-evaluation} compares the perturbation and batch allocations at fixed model size.}
\label{fig:loss-curves-total-device-time}
\end{figure}

\section{Further related work}\label{app:related-work}
Separating experts affects communication, gradient estimation and model execution. Beyond the closest predecessors discussed in Section~\ref{sec:related-work}, we compare methods that address these costs without using the same decomposition.

BTX converts separately trained branches into a jointly post-trained mixture of experts~\citep{sukhbaatar2024btx}. Federated averaging combines locally trained model updates~\citep{mcmahan2017}, and DiLoCo amortizes communication over local gradient updates~\citep{douillard2023diloco}. SOMA has no model averaging during the expert phase. Each expert and its optimizer state remain local, and predictions are combined at inference.

MeZO-SVRG reduces estimator variance using stochastic variance-reduced gradient updates during language-model fine-tuning~\citep{gautam2024mezosvrg}. Sparse MeZO instead perturbs selected parameters within an existing model~\citep{liu2025sparsemezo}. Restricting the updated coordinates does not by itself reduce the parameters executed at inference. These methods study fine-tuning a pretrained model, whereas SOMA changes the architecture and local objectives used for pretraining. Our experiments do not compare against these two methods.

DeepZero trains deep networks from scratch with sparse coordinate-wise finite differences~\citep{chen2024deepzero}. ZO-Muon combines subspace estimation with spectral orthogonalization~\citep{lang2026zomuon}. FOGZO injects finite-difference information into straight-through gradients for quantization-aware pretraining~\citep{yang2025fogzo}. Evolution strategies can distribute perturbations through shared randomness and scalar returns~\citep{salimans2017}.

Forward gradients use directional automatic differentiation~\citep{baydin2022forward}, while UORO estimates online recurrent derivatives without storing an entire BPTT graph~\citep{tallec2018uoro}. Both require derivatives. SOMA uses scalar body-loss evaluations. ByT5, MEGABYTE and BLT provide related byte-level modeling approaches~\citep{xue2022byt5,yu2023megabyte,pagnoni2024blt}.

\section{Architecture and parameter counts}\label{app:architecture-details}

The training and inference comparisons depend on which parameters are shared, perturbed and executed for each prediction. We specify these components here so that total model size and active model size can be counted consistently. The width-32 ensembles use a shared, weight-tied embedding/decode matrix $E$, $N$ recurrent experts $f_1,\dots,f_N$, and the fixed tf--idf cluster router of Section~\ref{sec:recipe}. Each selected assembly emits the next-token distribution. The router selects up to four assemblies from the observed context before they run. We use recurrent experts instead of transformers. A transformer cache requires $O(Td_t)$ memory per
layer and active expert for context length $T$, while dense-attention prefill requires $O(T^2d_t)$ compute.
An LSTM carries $O(d_t)$ recurrent state per layer, independent of context length. Model size grows through the number of bodies $N$ and
the expert width and number of layers (Fig.~\ref{fig:arch}). In SOMA, clustering supplies meaningful
specialization signal without per-expert projections, decorrelation losses, or a jointly trained gate. The bodies
are architecturally identical small LSTMs that differ in the data they see.

We use one token per UTF-8 input byte with a 256-class vocabulary rather than tiktoken's GPT-2-compatible
\texttt{r50k\_base} vocabulary of 50{,}257 tokens~\citep{openaitiktoken}. The dense vocabulary projection
therefore has 196$\times$ fewer output classes and 196$\times$ less decoder compute per model token at fixed width, which matters in ZO
because we project $2n_{\rm pert}B$ perturbed sequences per step. Byte sequences are longer, so this is not an end-to-end saving per unit of text.
Let $V=256$ be vocabulary size and $d_E$ the embedding width. We encode and decode with one matrix $E\in\R^{V\times d_E}$, with $d_E=d_t$ in the $d_t=32$ configuration. It is used twice and weight-tied as the input embedding
(the row $E_{\text{token}}\in\R^{d_t}$) and as the output decode head
(for body hidden state $h\in\R^{d_t}$, $\mathrm{logits}=h\,E^\top$). Because $d_E{=}d_t$, each expert
decodes its hidden state directly, with no per-expert projections. We evaluate prediction quality within 1,024-byte windows. The
token-to-logit map is learned \emph{once}, in the separate seed run, by
the local decoder-path rule of \S\ref{sec:train}. It is then \emph{frozen} and copied into every
expert, so the experts share one fixed decode. Adding additional experts adds only expert-local
parameters, never re-paying the $V\,d_E$ head cost nor re-learning the token vocabulary.

Let $D$ count residual blocks. The recurrent body uses $D{=}2$ pre-norm residual blocks of width
$d_t{=}32$, each an LSTM sublayer followed by a multi-layer perceptron (MLP) with a Gaussian error linear unit (GELU) nonlinearity (hidden $4d_t$), with gain-only LayerNorm and no
input/output projections because $d_E{=}d_t$. The final hidden state is decoded by $E^\top$
into next-token cross entropy. Each block contributes $16d_t^2$ weights and $2d_t$ normalization gains, followed by one final normalization. For $D$ blocks,
\[
M_{\rm body}(D,d_t)=16D d_t^2+(2D+1)d_t.
\]
With $D=2$, this gives $M_{\rm body}=32d_t^2+5d_t$ at $d_E=d_t$, while the shared tied head contains
$M_{\rm head}(d_t)=Vd_t$. Thus
$M_{\rm ensemble}=M_{\rm head}+NM_{\rm body}\approx8.44\mathrm{M}$ at $N=256$. Each expert body contains 32.9k parameters. Each of the two bias-free LSTM layers contributes $8d_t^2$ weights and each MLP contributes $8d_t^2$. Four block normalization gains and the final normalization add $5d_t$ parameters. The wider fixed model size controls add $2d_Ed_t$ projection weights. Their heads are tied between input and output within an expert, but updated locally and distinct across experts. We therefore count $N$ heads for those controls, giving $M_{\rm ensemble}=N(32d_t^2+5d_t+2d_Ed_t+Vd_E)$ for $d_E=32$ and $d_t\ne d_E$. The shared-head formula applies to the width-32 ensembles. Fitted tf--idf, SVD, and centroid state is separate from these neural counts. The SVD has up to $50{,}000\times128=6.4$M fitted coefficients.

These counting rules determine the expert widths used to keep the models near 8.44M total parameters in Figure~\ref{fig:independent-evaluation}. Table~\ref{tab:fixed-parameter-counts} lists both the ensemble total and each expert body size, making explicit how sharding changes the local estimation problem.
\begin{table}[H]\centering\small
\caption{Parameter counts for the SPSA sharding comparison.
Model size counts expert bodies and heads. All heads have width $d_E=32$. Wider bodies use input and output projections.
The $N=1,2,8$ heads are updated locally and counted separately. The $N=256$ head is shared and frozen.}
\label{tab:fixed-parameter-counts}
\begin{tabular}{@{}rrrrr@{}}
\toprule
$N$ & $d_t$ & head per expert & ensemble parameters & expert body parameters \\
\midrule
1 & 509 & 8.19k & 8.33M & 8.33M \\
2 & 361 & 8.19k & 8.41M & 4.20M \\
8 & 181 & 8.19k & 8.55M & 1.06M \\
256 & 32 & 8.19k (shared) & 8.44M & 32.9k \\
\bottomrule
\end{tabular}
\end{table}

We route the observed context, with the same tf--idf method used
to shard the training data, to a set $S$ of up to four clusters. Our benchmark assumes 256 observed bytes. Short-prompt generation is not evaluated here and requires an explicit cold-start routing policy. We embed each token with $E$ to $x\in\R^{d_t}$ and
pass the embedded tokens through each selected $\mathrm{LSTM}_1$ and $\mathrm{LSTM}_2$ body, decode each hidden
state $h_i\in\R^{d_t}$ through $E^\top$ to obtain $p_i$, average the selected predictions into the final
$p(\cdot\mid\text{context})$, and sample the next token.

\section{Training recipe details}\label{app:recipe-details}
Independent training requires each expert to receive a useful specialization task without relying on updates from the other experts. The seed run supplies the initial byte representation, and clustering determines both the training assignment and the inference route. The settings below specify these two stages of the recipe in Section~\ref{sec:recipe}.

The 1,000-update seed run uses $n_{\rm pert}=B=64$ and initial learning rate 0.0025 on the 10B-byte subset. This samples 65.5M byte positions before repeated perturbation evaluations. Appendix~\ref{app:training-details} specifies the body and decoder updates.

The 100B-byte corpus is split into non-overlapping 1,024-byte sequences, giving approximately 97.7M sequences. The word-level $\mathrm{TfidfVectorizer}$ uses \texttt{max\_features}=50000, sublinear term frequencies, English stopwords and \texttt{min\_df}=5. We reduce to 128 dimensions with truncated SVD, subtract the mean and $L_2$-normalize. Balanced spherical $k$-means fits $N$ centroids, and each sequence is assigned to its nearest centroid. The builder default uses 400,000 sample chunks, consistent with the saved IDF values. Both the clustering code and the evaluation router decode byte windows with UTF-8 and \texttt{errors="ignore"}. Incomplete or invalid UTF-8 sequences are omitted from the router's text features, while the language model retains and scores the original bytes. The artifact supplies the exact fitted routers.

Clustering precedes the 99/1 expert training and expert validation split. Expert validation loss drives learning-rate decay. Training assignment uses each full 1,024-byte window. At evaluation, the same fitted vectorizer, SVD and centroids see only the 256 observed prefix bytes before selecting experts, and loss is scored on the following 768 bytes. The router therefore has less context than was used for training assignment and never reads the scored continuation.

In the width-32 ensembles, each expert copies the seed body and tied embedding/decoder $E$. The body continues training on its cluster while $E$ remains frozen. The wider controls retain expert-local decoder updates, as specified in Appendix~\ref{app:figure2-controls} and Table~\ref{tab:fixed-parameter-counts}.

\label{app:additional-ablations}
\subsection{Architecture ablations}\label{app:original-ablations}

The recipe assigns different data to experts and gives them a learned starting representation. To check which of these choices helps prediction, we vary partitioning, initialization and head training separately. These BPTT controls use $B=512$ and the same ensemble validation protocol.

\begin{figure}[!htbp]
\centering
\begin{tikzpicture}
\begin{groupplot}[
 group style={group size=3 by 1,horizontal sep=.95cm},
 width=.245\linewidth,height=4.25cm,scale only axis,
 ylabel={\shortstack{Ensemble validation loss\\(nats/byte)}},
 xticklabel style={font=\scriptsize,align=center},
 yticklabel style={font=\scriptsize},label style={font=\scriptsize},
 title style={font=\small,align=center},
 ymajorgrids,grid style={gray!20},
 tick align=outside,/tikz/mark size=2.4pt,
 nodes near coords,point meta=y,
 every node near coord/.append style={font=\scriptsize,anchor=south,yshift=3pt,/pgf/number format/fixed,/pgf/number format/precision=3,/pgf/number format/fixed zerofill},
 error bars/y dir=both,error bars/y explicit]
\nextgroupplot[
 title={Partitioning\\$N=32$, 180k updates},
 xmin=.5,xmax=3.5,ymin=1.36,ymax=1.47,
 xtick={1,2,3},xticklabels={Semantic,\shortstack{Random\\disjoint},\shortstack{Global\\IID}}]
\addplot+[only marks,color=somaC,mark=*,mark options={fill=somaC,draw=somaC},error bars/error bar style={draw=somaC},restrict x to domain=1:1] table[x=row,y=mean_loss,y error=standard_error,col sep=comma]{figdata/plot_clarity_v2/partition_ablation.csv};
\addplot+[only marks,color=somaOrange,mark=*,mark options={fill=somaOrange,draw=somaOrange},error bars/error bar style={draw=somaOrange},restrict x to domain=2:2] table[x=row,y=mean_loss,y error=standard_error,col sep=comma]{figdata/plot_clarity_v2/partition_ablation.csv};
\addplot+[only marks,color=somaE,mark=*,mark options={fill=somaE,draw=somaE},error bars/error bar style={draw=somaE},restrict x to domain=3:3] table[x=row,y=mean_loss,y error=standard_error,col sep=comma]{figdata/plot_clarity_v2/partition_ablation.csv};
\nextgroupplot[
 title={Initialization\\$N=8$, 850k updates},ylabel={},
 xmin=.5,xmax=2.5,ymin=1.405,ymax=1.447,
 xtick={1,2},xticklabels={Warm,Cold}]
\addplot+[only marks,color=somaC,mark=*,mark options={fill=somaC,draw=somaC},error bars/error bar style={draw=somaC},restrict x to domain=1:1] table[x=row,y=mean_loss,y error=standard_error,col sep=comma]{figdata/plot_clarity_v2/initialization_ablation.csv};
\addplot+[only marks,color=somaOrange,mark=*,mark options={fill=somaOrange,draw=somaOrange},error bars/error bar style={draw=somaOrange},restrict x to domain=2:2] table[x=row,y=mean_loss,y error=standard_error,col sep=comma]{figdata/plot_clarity_v2/initialization_ablation.csv};
\nextgroupplot[
 title={Head training\\$N=1$, 55k updates},ylabel={},
 xmin=.5,xmax=3.5,ymin=1.4,ymax=2.4,
 xtick={1,2,3},xticklabels={\shortstack{Train\\encoder\\and decoder},\shortstack{Train\\decoder\\only},\shortstack{Freeze\\random\\head}}]
\addplot+[only marks,color=somaD,mark=*,mark options={fill=somaD,draw=somaD},error bars/error bar style={draw=somaD},restrict x to domain=1:1] table[x=row,y=mean_loss,y error=standard_error,col sep=comma]{figdata/plot_clarity_v2/head_ablation.csv};
\addplot+[only marks,color=somaC,mark=*,mark options={fill=somaC,draw=somaC},error bars/error bar style={draw=somaC},restrict x to domain=2:2] table[x=row,y=mean_loss,y error=standard_error,col sep=comma]{figdata/plot_clarity_v2/head_ablation.csv};
\addplot+[only marks,color=somaA,mark=*,mark options={fill=somaA,draw=somaA},error bars/error bar style={draw=somaA},restrict x to domain=3:3] table[x=row,y=mean_loss,y error=standard_error,col sep=comma]{figdata/plot_clarity_v2/head_ablation.csv};
\end{groupplot}
\node[font=\small,anchor=south] at ([yshift=1.1cm]group c2r1.north) {BPTT, $B=512$, expert width 32};
\end{tikzpicture}
\caption{Semantic shards and warm initialization lower loss. Means and standard errors over three seeds for each condition at the shown training steps. The seed model's training cost is excluded. Updating the tied head only through the decoder raises loss by 0.029 nats/byte, while freezing a random head performs much worse.}
\label{fig:original-ablations}
\end{figure}

These controls support semantic sharding and warm initialization. Decoder-only training retains most of the benefit of training the tied head.

\section{Training updates}\label{app:training-details}

The update rule separates learning the byte decoder from estimating recurrent-body gradients. This keeps the vocabulary matrix out of the perturbation and limits the dimension of each SPSA estimate. We first give the local decoder update, then specify how body estimates are accumulated and passed to Adam. Let $L_{\mathrm{dec}}$ be mean cross entropy over $B$ sequences and $T$ target positions. Write $E\in\R^{V\times d_E}$ and the decoder input $h_{b,t}\in\R^{d_E}$ as a column vector for sequence $b$ at position $t$. Let $y_{b,t+1}$ be the next byte, $\mathrm{onehot}(y)\in\R^V$ its target indicator, and $p_{b,t}=\softmax(Eh_{b,t})\in\R^V$ the predicted probability vector. Holding $h_{b,t}$ fixed, the direct decoder-path derivative is
\begin{equation}
\nabla_E L_{\mathrm{dec}}
=\frac{1}{BT}\sum_{b=1}^{B}\sum_{t=1}^{T}
\big(p_{b,t}-\mathrm{onehot}(y_{b,t+1})\big)h_{b,t}^{\top}
\in\R^{V\times d_E},
\label{eq:delta}
\end{equation}
Because $E$ is weight-tied, applying the update changes
the same values used on the input side, but Eq.~\eqref{eq:delta} deliberately omits the indirect derivative
through the input embedding and recurrent body. It is therefore an exact decoder-path delta rule, not the full
gradient of a tied embedding/decode matrix. Computing this local expression does not invoke reverse-mode
automatic differentiation or propagate derivatives through the recurrent network. The outer product can
be accumulated from the current forward hidden states and output residuals without retaining a recurrent
activation tape. $E$ is never SPSA-perturbed.

Under \soma{} each expert optimizes a single local objective,
the next-token cross entropy on its own cluster, with no coupling to the others. Let
$\theta\in\R^d$ denote that expert's $d$ perturbed body parameters and let $E$ denote its
embedding/decoder, held fixed while forming the body estimate. For a minibatch $\mathcal B$, write $L_{\mathcal B}(\theta,E)$ for its mean
next-token cross entropy and $L(\theta)=\E_{\mathcal B}[L_{\mathcal B}(\theta,E)]$ for the corresponding
cluster objective. For each batch we independently sample $n_{\rm pert}$ sparse Rademacher probes $z_j$. Normalization-gain coordinates are −1 or +1 with equal probability; every other coordinate is zero with probability 1/2 and −1 or +1 with probability 1/4 each. The reference theory uses dense Rademacher probes. The measurements in Appendix~\ref{app:estimator-iso} use the recorded perturbation distribution.
Within each batch, all $n_{\rm pert}$ central-difference pairs use the same data, giving the estimate $\widehat g_{n_{\rm pert}}(\theta)$. At update $t\ge1$, $\widehat g_t$ averages these estimates over $A$ independently sampled accumulation batches and directions. For parameters $\theta_t$ and learning rate $\eta_t$, Adam-style moments~\citep{kingma2015adam} $m_t,v_t$ track the first and second moments with decay factors $\beta_1,\beta_2$. Their bias-corrected values are $\bar m_t,\bar v_t$, and $\delta_{\rm Adam}$ stabilizes division. Starting with $m_0=v_0=0$, the update is
\begin{equation}
\begin{gathered}
\widehat g_{n_{\rm pert}}(\theta)
=\frac{1}{n_{\rm pert}}\sum_{j=1}^{n_{\rm pert}}
\frac{L_{\mathcal B}(\theta+\varepsilon z_j,E)
      -L_{\mathcal B}(\theta-\varepsilon z_j,E)}{2\varepsilon}\,z_j,\\
m_t=\beta_1m_{t-1}+(1-\beta_1)\widehat g_t,\qquad
v_t=\beta_2v_{t-1}+(1-\beta_2)\widehat g_t^{2},\\
\bar m_t=\frac{m_t}{1-\beta_1^t},\qquad
\bar v_t=\frac{v_t}{1-\beta_2^t},\qquad
\theta_{t+1}=\theta_t-\eta_t\frac{\bar m_t}{\sqrt{\bar v_t}+\delta_{\rm Adam}}.
\end{gathered}
\label{eq:localspsa}
\end{equation}
Squares, square roots and division act coordinatewise. Accumulation gives $An_{\rm pert}$ directions per update. The $B=256,1024$ runs use four and sixteen batches of 64. We use $\beta_1=0.9$ and $\beta_2=0.999$.
Equation~\eqref{eq:localspsa} shows the update before weight decay and follows
the adaptive-momentum ZO family~\citep{chen2019zoadamm}. For coupled weight decay $\lambda$, the implementation adds $\lambda\theta$ before the moments. The wide SPSA controls use $\lambda=10^{-4}$. If $q_j$ is the probability that probe coordinate $j$ is nonzero and $g_j=\partial L/\partial\theta_j$, its expected input to Adam is $q_jg_j+\lambda\theta_j+O(\varepsilon^2)$. Here $q_j=1/2$ for ordinary weights and $q_j=1$ for normalization gains, with no inverse-probability rescaling. Thus coupled decay is twice as large relative to the expected data-gradient contribution on ordinary weights. This is a pre-Adam statement, not an exact multiplier of Adam's effective learning rate. One update uses $2An_{\rm pert}$ perturbed forward evaluations.

For a hidden state $h$ and target byte $y$, let $e_v$ denote vocabulary row $v$ of $E$. The width-32 ensembles train the body on the full-vocabulary next-token cross entropy
$L_{\mathrm{CE}}(h)=\log\sum_v e^{\,e_v\cdot h}-e_y\cdot h$, evaluated for each of the
$2n_{\mathrm{pert}}$ perturbed bodies per accumulation batch against the \emph{frozen} tied head $E$. SPSA reads the
directional derivative from the same $\pm$ pair (Eq.~\eqref{eq:localspsa}). Because $E$ never enters the
perturbation, this is a local SPSA problem on 32.9k expert body parameters, and the head adds no perturbed coordinates.

The body update uses local scalar losses. The width-32 ensemble recipe applies the decoder-path update only during seed training. The wide $N=1,2,8$ controls continue applying it to each expert's own head, without sharing head updates across experts.

\section{Gradient variance derivations}
\label{app:block-sharding-proof}\label{app:background}
Local losses exclude other experts' perturbations from each estimate. We derive how this changes relative variance at fixed total model size, then examine its implications for update cost and convergence on a separable objective. The subsequent batch calculation distinguishes averaging more data from averaging more perturbations.

Let \(\mathcal D=\bigsqcup_{e=1}^{N}\mathcal D_e\) be the partition of the training corpus among
\(N\) experts. Let \(M\) be the total number of perturbed coordinates. We compare one monolithic LSTM with \(\theta^{\rm mono}\in\R^M\) against a SOMA ensemble with these coordinates divided equally as
\(\theta^{(e)}\in\R^{M/N}\). For a fixed minibatch \(\mathcal B\) of effective size \(B\), write the
monolithic loss as \(L_{\mathcal B}^{\rm mono}(\theta^{\rm mono})\). For expert \(e\), let
\(\mathcal B_e\subset\mathcal D_e\) denote its fixed local minibatch and write its loss as
\(L_{\mathcal B_e}^{(e)}(\theta^{(e)})\). The calculation below is conditional on these minibatches:
\(B\) controls ordinary sampling noise, whereas the result isolates variance from the random SPSA
directions.

For \(\varepsilon>0\), the monolithic gradient estimate \(\widehat g^{\rm mono}\) uses independent Rademacher probes \(z_i\in\{-1,+1\}^{M}\), each with independent, equally likely signs,
\begin{equation}
\widehat g^{\rm mono}
=\frac{1}{n_{\rm pert}}\sum_{i=1}^{n_{\rm pert}}
\frac{L_{\mathcal B}^{\rm mono}(\theta^{\rm mono}+\varepsilon z_i)
-L_{\mathcal B}^{\rm mono}(\theta^{\rm mono}-\varepsilon z_i)}
{2\varepsilon}\,z_i .
\label{eq:appendix-monolithic-estimator}
\end{equation}
Expert \(e\)'s gradient estimate \(\widehat g^{(e)}\) independently uses Rademacher probes \(z_i^{(e)}\in\{-1,+1\}^{M/N}\),
\begin{equation}
\widehat g^{(e)}
=\frac{1}{n_{\rm pert}}\sum_{i=1}^{n_{\rm pert}}
\frac{L_{\mathcal B_e}^{(e)}(\theta^{(e)}+\varepsilon z_i^{(e)})
-L_{\mathcal B_e}^{(e)}(\theta^{(e)}-\varepsilon z_i^{(e)})}
{2\varepsilon}\,z_i^{(e)} .
\label{eq:appendix-expert-estimator}
\end{equation}
Define
\(g^{\rm mono}:=\nabla_{\theta^{\rm mono}}L_{\mathcal B}^{\rm mono}\),
\(g^{(e)}:=\nabla_{\theta^{(e)}}L_{\mathcal B_e}^{(e)}\). Let \(\widehat G^{\rm SOMA}\) and \(G^{\rm SOMA}\) concatenate the estimated and exact expert gradients,
\[
\widehat G^{\rm SOMA}
:=\operatorname{concat}(\widehat g^{(1)},\ldots,\widehat g^{(N)}),
\qquad
G^{\rm SOMA}
:=\operatorname{concat}(g^{(1)},\ldots,g^{(N)}).
\]

\begin{proposition}[Gradient variance]
\label{prop:block-variance}
Condition on fixed minibatches and nonzero gradients, with \(\E_z\) averaging over the probes. For losses with three continuous derivatives, to leading order as \(\varepsilon\to0\),
\begin{align}
\frac{\E_z\lVert\widehat g^{\rm mono}-g^{\rm mono}\rVert_2^2}
{\lVert g^{\rm mono}\rVert_2^2}
&=\frac{M-1}{n_{\rm pert}}+O(\varepsilon^2), \\
\frac{\E_z\lVert\widehat G^{\rm SOMA}-G^{\rm SOMA}\rVert_2^2}
{\lVert G^{\rm SOMA}\rVert_2^2}
&=\frac{M/N-1}{n_{\rm pert}}+O(\varepsilon^2).
\label{eq:block-variance}
\end{align}
Consequently, the ratio of their leading relative perturbation variances is
\begin{equation}
\frac{M/N-1}{M-1}
\xrightarrow[M/N\to\infty]{N\ {\rm fixed}}\frac{1}{N}.
\label{eq:matched-compute-variance}
\end{equation}
\end{proposition}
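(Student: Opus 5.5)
I would first reduce everything to a single generic estimator in dimension $d$ and then specialize. Fix a loss $L$ with three continuous derivatives, a point $\theta\in\R^d$ and a Rademacher probe $z\in\{-1,+1\}^d$. Taylor expanding both sides of the central difference gives
\[
\frac{L(\theta+\varepsilon z)-L(\theta-\varepsilon z)}{2\varepsilon}=g^\top z+\frac{\varepsilon^2}{6}D^3L(\theta)[z,z,z]+o(\varepsilon^2).
\]
The cubic term is bounded uniformly because $\|z\|^2=d$ and $D^3L$ is continuous on a neighborhood. So each single-probe estimate is $s:=(g^\top z)z+\varepsilon^2 r(z)$ with $r$ bounded.

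The core computation is the exact second moment of the leading term, using only $\E[z_iz_j]=\delta_{ij}$ and $z_i^2=1$. We have $\E[(g^\top z)z]=g$, so the estimator is unbiased to leading order. For the second moment, $\E\|(g^\top z)z\|^2=\E[(g^\top z)^2\|z\|^2]=d\,\E(g^\top z)^2=d\|g\|^2$. Hence the single-probe variance is $(d-1)\|g\|^2$.

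Next I average over $n_{\rm pert}$ independent probes. The centered parts are independent with mean zero, so the variance divides by $n_{\rm pert}$. The bias and the cross terms with $\varepsilon^2 r$ contribute $O(\varepsilon^2)$, since the cross term is $\varepsilon^2$ times a bounded expectation and the squared bias is $O(\varepsilon^4)$. Dividing by $\|g\|^2\neq0$ gives the relative error $(d-1)/n_{\rm pert}+O(\varepsilon^2)$. Taking $d=M$ gives the monolithic line of the proposition.

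For SOMA, the key observation is that the concatenated error norm splits as
\[
\|\widehat G^{\rm SOMA}-G^{\rm SOMA}\|^2=\sum_e\|\widehat g^{(e)}-g^{(e)}\|^2,
\]
because each block is estimated from its own loss and its own probes. Applying the single-block result with $d=M/N$ to each expert gives $\sum_e\big[(M/N-1)\|g^{(e)}\|^2/n_{\rm pert}+O(\varepsilon^2)\big]$. Dividing by $\|G^{\rm SOMA}\|^2=\sum_e\|g^{(e)}\|^2$ yields $(M/N-1)/n_{\rm pert}+O(\varepsilon^2)$.

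I need nonzero total gradient here. Individual $g^{(e)}$ may vanish, since the $O(\varepsilon^2)$ terms are absolute and get divided only by the total norm. The equal block size is what makes the coefficient common to all blocks and lets it factor out.

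The ratio of the two leading terms is $(M/N-1)/(M-1)$. Writing it as $\frac{1}{N}\cdot\frac{M-N}{M-1}$ shows it tends to $1/N$ as $M\to\infty$ with $N$ fixed.

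The main obstacle is making the remainder rigorous: I must check that the $O(\varepsilon^2)$ constants may depend on $d$ (through $\|z\|^3$ in the cubic term) but not on $\varepsilon$. That is fine for a fixed-dimension leading-order statement, and I would state it explicitly. I would also remark that the comparison holds the gradient fixed. For a separable objective, the monolithic gradient equals the concatenation $G^{\rm SOMA}$, so the ratio compares like with like.
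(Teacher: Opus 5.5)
Your proof is correct and follows the paper's route. Both reduce each probe by a central Taylor expansion to $z z^\top g + O(\varepsilon^2)$, compute a per-probe error of $(d-1)\lVert g\rVert_2^2$, divide by $n_{\rm pert}$, and sum per-expert errors using $\lVert G^{\rm SOMA}\rVert_2^2=\sum_e\lVert g^{(e)}\rVert_2^2$. The differences are cosmetic: you get the second moment in one line from $\lVert z\rVert_2^2=d$ rather than coordinatewise, and you control the $O(\varepsilon^2)$ cross terms and possibly vanishing individual blocks more explicitly than the paper does.
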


\begin{proof}
Consider either estimator in a parameter space of dimension equal to \(M\) for the monolith or
\(M/N\) for one expert. Write its loss as \(L(\theta)\), its exact gradient as \(g=\nabla L(\theta)\), and a Rademacher direction as \(z_i\). A central Taylor expansion gives
\[
\frac{L(\theta+\varepsilon z_i)-L(\theta-\varepsilon z_i)}
{2\varepsilon}\,z_i
=z_i z_i^\top g+O(\varepsilon^2).
\]
For coordinate \(j\), the leading estimation error from one direction is
\[
(z_i z_i^\top g-g)_j
=z_{i,j}\sum_{k\ne j}z_{i,k}g_k.
\]
The independent Rademacher coordinates make this expression mean zero and give
\[
\E_z\!\left[(z_i z_i^\top g-g)_j^2\right]
=\sum_{k\ne j}g_k^2.
\]
Summing over coordinates yields \((M-1)\lVert g\rVert_2^2\) for the monolith and
\((M/N-1)\lVert g\rVert_2^2\) for one expert. Averaging \(n_{\rm pert}\) independent directions
divides each variance by \(n_{\rm pert}\). Finally, summing the independent expert errors gives
\[
\sum_{e=1}^{N}\E_z\lVert\widehat g^{(e)}-g^{(e)}\rVert_2^2
=\frac{M/N-1}{n_{\rm pert}}\sum_{e=1}^{N}\lVert g^{(e)}\rVert_2^2
+O(\varepsilon^2),
\]
which is the stated SOMA identity because
\(\lVert G^{\rm SOMA}\rVert_2^2=\sum_e\lVert g^{(e)}\rVert_2^2\).
\end{proof}

The same variance ratio also holds on one fixed separable objective $F=\sum_eL_e$. Using one summed loss estimates an $M$-dimensional gradient. Using each expert's own loss estimates $N$ local gradients on the same model and minibatches. This isolates the estimator benefit from changes in model size or data partitioning. Figure~\ref{fig:theory-empirical}, left, checks this comparison using weights from training.

The comparison is also matched in parameter-forward compute under the usual linear-cost model. One
monolithic central-difference pair evaluates \(M\) parameters twice, for cost \(2M\). One SOMA round
evaluates \(N\) experts with \(M/N\) parameters twice, for the same aggregate cost
\(N\cdot2(M/N)=2M\). Thus both sides of
Eq.~\eqref{eq:matched-compute-variance} use the same \(n_{\rm pert}\) at equal idealized compute.

\begin{corollary}[Perturbation-limited convergence rate]
\label{cor:block-convergence}
Suppose the separable SOMA training objective with fixed local minibatches $\mathcal B_e$
\begin{equation}
L_{\rm train}(\theta^{(1)},\ldots,\theta^{(N)})
:=\sum_{e=1}^{N}L_{\mathcal B_e}^{(e)}(\theta^{(e)})
\label{eq:separable-training-objective}
\end{equation}
has gradient Lipschitz constant \(L_{\rm sm}\) and lower bound \(L_*\). Let \(\theta_k\) concatenate the expert parameters at update \(k\), starting from \(\theta_0\), and let \(\widehat G_k^{\rm SOMA}\) be their gradient estimate. In the limit \(\varepsilon\to0\), use constant step size \(\eta\) in \(\theta_{k+1}=\theta_k-\eta\widehat G_k^{\rm SOMA}\), with independently sampled perturbations at every update. If
\[
0<\eta\le
\frac{1}{L_{\rm sm}\left(1+(M/N-1)/n_{\rm pert}\right)},
\]
then over \(T\) updates, with expectation over all sampled perturbations,
\begin{equation}
\frac{1}{T}\sum_{k=0}^{T-1}
\E\lVert\nabla L_{\rm train}(\theta_k)\rVert_2^2
\le
\frac{2\big(L_{\rm train}(\theta_0)-L_*\big)}{\eta T}.
\label{eq:block-stationarity-rate}
\end{equation}
With the largest allowed step size, the dimension-dependent multiplier is
\(1+(M/N-1)/n_{\rm pert}\), compared with
\(1+(M-1)/n_{\rm pert}\) for the monolithic estimator. Their ratio is
\begin{equation}
\frac{1+(M/N-1)/n_{\rm pert}}
{1+(M-1)/n_{\rm pert}}
\xrightarrow[M/n_{\rm pert}\to\infty]{N\ {\rm fixed}}\frac{1}{N}.
\label{eq:block-convergence-ratio}
\end{equation}
\end{corollary}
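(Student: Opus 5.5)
We will use the standard nonconvex SGD descent-lemma argument, with the second moment of the estimator supplied by Proposition~\ref{prop:block-variance}. In the limit $\varepsilon\to0$, the estimator for expert $e$ is $\frac{1}{n_{\rm pert}}\sum_i z_i^{(e)}z_i^{(e)\top}g^{(e)}$. Since $\E_z[zz^\top]=I$ for Rademacher probes, $\widehat G_k^{\rm SOMA}$ is conditionally unbiased given $\theta_k$: $\E[\widehat G_k\mid\theta_k]=\nabla L_{\rm train}(\theta_k)$. This uses separability, because $\nabla L_{\rm train}$ is exactly the concatenation of the $g^{(e)}$.

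Proposition~\ref{prop:block-variance} gives the conditional variance $\E\lVert\widehat G_k-\nabla L_{\rm train}(\theta_k)\rVert^2=\frac{M/N-1}{n_{\rm pert}}\lVert\nabla L_{\rm train}(\theta_k)\rVert^2$. Adding the squared mean yields the second-moment bound
\[
\E\big[\lVert\widehat G_k\rVert^2\mid\theta_k\big]=\Big(1+\tfrac{M/N-1}{n_{\rm pert}}\Big)\lVert\nabla L_{\rm train}(\theta_k)\rVert^2=:c\,\lVert\nabla L_{\rm train}(\theta_k)\rVert^2 .
\]
The perturbations at step $k$ are drawn independently of the past, so conditioning on $\theta_k$ is legitimate.

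Next apply $L_{\rm sm}$-smoothness: $L_{\rm train}(\theta_{k+1})\le L_{\rm train}(\theta_k)-\eta\langle\nabla L_{\rm train},\widehat G_k\rangle+\frac{L_{\rm sm}\eta^2}{2}\lVert\widehat G_k\rVert^2$. Taking conditional expectation gives
\[
\E[L_{\rm train}(\theta_{k+1})\mid\theta_k]\le L_{\rm train}(\theta_k)-\eta\Big(1-\tfrac{L_{\rm sm}\eta c}{2}\Big)\lVert\nabla L_{\rm train}(\theta_k)\rVert^2 .
\]
The step-size condition $\eta\le 1/(L_{\rm sm}c)$ makes the bracket at least $1/2$. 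Taking total expectations, telescoping over $k=0,\dots,T-1$, using $L_{\rm train}(\theta_T)\ge L_*$, and dividing by $\eta T/2$ gives \eqref{eq:block-stationarity-rate}.

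For the comparison, substitute $\eta=1/(L_{\rm sm}c)$. The bound becomes $2L_{\rm sm}c\,(L_{\rm train}(\theta_0)-L_*)/T$, so the dimension-dependent multiplier is $c$. Running the identical argument with the monolithic variance $(M-1)/n_{\rm pert}$ from Proposition~\ref{prop:block-variance} gives the multiplier $1+(M-1)/n_{\rm pert}$.

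For the limit \eqref{eq:block-convergence-ratio}, divide numerator and denominator by $M/n_{\rm pert}$:
\[
\frac{n_{\rm pert}/M+1/N-1/M}{n_{\rm pert}/M+1-1/M}\longrightarrow\frac{1}{N},
\]
since $n_{\rm pert}/M\to0$ and, for $n_{\rm pert}\ge1$, also $1/M\to0$.

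The main obstacle is rigor in the $\varepsilon\to0$ idealization. For $\varepsilon>0$ the $O(\varepsilon^2)$ bias would break exact unbiasedness. The corollary sidesteps this by stating the limit, so I will work with the exact leading-order estimator $zz^\top g$. For that estimator, the variance identity in the proof of Proposition~\ref{prop:block-variance} holds exactly rather than only to leading order, and the constants need no correction.
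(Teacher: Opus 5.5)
Your proposal is correct and follows essentially the same route as the paper. Proposition~\ref{prop:block-variance} supplies the second moment $\bigl(1+(M/N-1)/n_{\rm pert}\bigr)\lVert\nabla L_{\rm train}(\theta_k)\rVert_2^2$. The smoothness inequality with the step-size restriction then gives a conditional decrease of $\frac{\eta}{2}\lVert\nabla L_{\rm train}(\theta_k)\rVert_2^2$, and telescoping against $L_*$ yields the rate.

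You also make explicit two steps the paper leaves implicit. The first is the conditional unbiasedness of the $\varepsilon\to0$ estimator, which is needed both for the cross term and to turn the proposition's centered variance into a second moment. The second is the elementary computation of the limiting ratio.
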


\begin{proof}
Proposition~\ref{prop:block-variance} gives
\[
\E_z\lVert\widehat G_k^{\rm SOMA}\rVert_2^2
=\left(1+\frac{M/N-1}{n_{\rm pert}}\right)
\lVert\nabla L_{\rm train}(\theta_k)\rVert_2^2.
\]
Applying the smoothness inequality to the update, taking conditional expectation, and using the step
size restriction gives
\[
\E[L_{\rm train}(\theta_{k+1})\mid\theta_k]
\le L_{\rm train}(\theta_k)
-\frac{\eta}{2}\lVert\nabla L_{\rm train}(\theta_k)\rVert_2^2.
\]
Summing over \(k=0,\ldots,T-1\) and using
\(L_{\rm train}(\theta_T)\ge L_*\) proves
Eq.~\eqref{eq:block-stationarity-rate}. Substituting the largest permitted step size gives
Eq.~\eqref{eq:block-convergence-ratio}.
\end{proof}

Under these assumptions, smaller independent subproblems reduce the variance term in the stationarity bound.
This corollary concerns convergence toward stationarity on each method's own training objective. It
does not compare attainable loss values, and it does not claim that the routed inference objective
improves by \(1/N\). Finite \(\varepsilon\), stochastic minibatches, adaptive momentum, non-linear
hardware costs, and the shared frozen head add effects outside this perturbation-only calculation.

\subsection{Allocating an update budget}\label{app:variance-allocation-theory}
We ask how to divide one update's compute between shards, perturbations and
batches. This supports the total-budget choice in
Section~\ref{sec:independent-evaluation}, where test loss is the goal.

We extend the dense Rademacher calculation in
Appendix~\ref{app:block-sharding-proof} to minibatch gradients and forward cost.
Let $M$ be the
number of perturbed coordinates, and $d=M/N$ the size of each block. We hold the separable objective and weights fixed, with nonzero population gradient $g$, concatenated across experts. We treat $N$ as continuous with
equal block sizes. Integer choices must give feasible partitions.

\subsubsection{Batch size and perturbations}

A larger batch reduces data-sampling variation, while additional perturbations reduce the variation caused by estimating a gradient from a finite set of directions. We separate these contributions to determine how they interact within one update. Let $g_B$ be the concatenated gradient from $B$ independent sequences per
expert, with $\E g_B=g$ and
$\E\|g_B-g\|^2=\sigma^2/B$. Define $\nu=\sigma^2/\|g\|^2$ and $a=d-1$.
Here $\sigma^2$ is the variance across individual example gradients. We hold
$g$ and $\sigma^2$ fixed across parameter groupings. Both can change during
training.
We use the same batch for all $n_{\rm pert}$ independent dense Rademacher probes and
both signs of every probe. Let $\widehat g$ concatenate the resulting expert estimates and $v$ denote their relative gradient variance. As $\varepsilon\to0$, expectation over batches and probes gives
\begin{align}
v(N,n_{\rm pert},B)
&=\frac{\E\|\widehat g-g\|^2}{\|g\|^2}
=\frac{a}{n_{\rm pert}}\frac{\E\|g_B\|^2}{\|g\|^2}+\frac{\nu}{B}\nonumber\\
&=\frac{a}{n_{\rm pert}}+\frac{\nu}{B}+\frac{a\nu}{n_{\rm pert}B}.
\label{eq:vre-variance}
\end{align}
The absolute estimator variance is $\|g\|^2v$. In this unbiased limit, it also equals mean squared gradient error.
If each probe instead gets an independently sampled batch, the variance is
$[a+(a+1)\nu/B]/n_{\rm pert}$.
With a shared batch, increasing $n_{\rm pert}$ reduces perturbation variance
and leaves the minibatch term $\nu/B$ unchanged.

At fixed $N$ and a fixed product $K=n_{\rm pert}B\ge1$, substituting $n_{\rm pert}=K/B$ gives $v=aB/K+\nu/B+a\nu/K$. For $a,\nu>0$, its derivative vanishes at $B=\sqrt{K\nu/a}$. Subject to $B,n_{\rm pert}\ge1$, the continuous optimum is
\begin{equation}
B^*=\min\!\left\{K,\max\!\left\{1,\sqrt{K\nu/a}\right\}\right\},
\qquad n_{\rm pert}^*=K/B^*.
\label{eq:shared-batch-optimum}
\end{equation}
For integer settings, evaluate the feasible $(B,n_{\rm pert})$ pairs under the product budget. This optimizes one update of the dense, shared-batch estimator as $\varepsilon\to0$. It does not optimize loss after training or the independent-batch accumulation schedule below.

\subsubsection{Accumulating independent batches}
The training sweeps implement larger effective batches by accumulating estimates formed with independently sampled data and directions. This changes both sources of variation, so its variance differs from evaluating all directions on one shared large batch. Averaging estimates from $A$ batches of
$b$ sequences, with $n_{\rm pert}$ probes per batch, gives relative variance $v_{\rm acc}$,
\[
v_{\rm acc}=\frac{a}{n_{\rm pert}A}+\frac{\nu}{bA}+\frac{a\nu}{n_{\rm pert}bA}.
\]
Using one batch $B=Ab$ with $n_{\rm pert}$ probes instead leaves the perturbation
term $a/n_{\rm pert}$.

More generally, let $Z(X,\delta)$ be one gradient estimate from batch $X$ and random probe $\delta$, and $\widehat g$ their average over $A$ batches and $n_{\rm pert}$ probes per batch. Subscripts specify which randomness is averaged over. Writing $\operatorname{tr}\operatorname{Cov}$ for the sum of coordinate variances, independent batches and directions give
\begin{equation}
\operatorname{tr}\operatorname{Cov}(\widehat g)
=\frac{\E_X\operatorname{tr}\operatorname{Cov}_{\delta}(Z\mid X)}{A n_{\rm pert}}
+\frac{\operatorname{tr}\operatorname{Cov}_X(\E_{\delta}[Z\mid X])}{A}.
\label{eq:accumulation-variance}
\end{equation}
This identity holds at finite $\varepsilon$ for sampled probes. It measures variance
around the estimator mean, which may be biased. At fixed $n_{\rm pert}B$ and
$b$, the total number of directions $A n_{\rm pert}$ stays fixed. More independently sampled
batches reduce the second term while the first stays unchanged. %

More shards reduce the predicted update variance when experts remain viable
and overhead is small. The best training allocation must also repay its cost
in test loss, as tested in Section~\ref{sec:independent-evaluation}.

\section{Tests of the variance predictions}
The variance identities describe a simplified estimator, while training uses saved weights, finite $\varepsilon$ and a recorded probe distribution. We first check how well the predictions describe those trained models. We then hold the objective fixed to isolate the effects of local losses and the batch--perturbation allocation.

\subsection{Sharding at fixed model size}\label{app:estimator-iso}

To determine whether the variance reduction survives the trained parameter values and numerical implementation, we replay the checkpoints in Figure~\ref{fig:allocation-variance} under both GPU and float64 arithmetic. At 5,000 and 10,000 updates, we fix two
ensemble validation chunks and include every expert and perturbed coordinate in the
roughly 8.44M models. We evaluate the same 128 directions and FP32 parameter endpoints on an
RTX~5090 and in CPU float64. Exact gradients use float64. The saved kernels
and precision settings are used for $N=1,2,8$. For $N=256$, we use the
retained trainer implementation.

For exact batch gradients $g_e$ and averaged estimates $\widehat g_e$, define relative centered gradient variance $R_g$, with covariance over random probes $\delta$,
\[
R_g=\frac{\sum_e\operatorname{tr}\operatorname{Cov}_{\delta}(\widehat g_e)}
{\sum_e\|g_e\|^2}.
\]
We estimate centered covariance from 128 independent single-probe estimates
and divide by $n_{\rm pert}=64$. We retain the recorded perturbation distribution and $\varepsilon$. Coordinate
activity probability is $q_{ej}=1$ for normalization coordinates of expert e and 1/2 otherwise. Let $q_e$ collect these probabilities and $g_{ej}$ denote gradient coordinate $j$.
The leading mean is $\operatorname{diag}(q_e)g_e$, so $R_g$ measures centered
variance, not total gradient error. With $s_e=\sum_jq_{ej}$, the small-$\varepsilon$
prediction is
\[
R_g\simeq\frac{\sum_{e,j}q_{ej}(1+s_e-2q_{ej})g_{ej}^2}
{n_{\rm pert}\sum_e\|g_e\|^2}.
\]
The GPU measurements follow this prediction. At $N=256$, relative gradient
variance is 136 and 118 times lower than at $N=1$ at the two stages. The
float64 ratios are 136 and 121. Sharding retains the measured variance
reduction under this GPU-precision replay.

\label{app:perturbation-scaling}

We test whether more perturbations give the expected variance reduction at
the same weights and batch. Let $X_{e,r}$ be the estimate from probe $r$ for expert $e$, and $a_N=\sum_e\operatorname{tr}\operatorname{Cov}(X_{e,1})/\sum_e\|g_e\|^2$ the ensemble's relative variance before averaging. Independent probes give
\begin{equation}
\widehat g_e=\frac{1}{n_{\rm pert}}\sum_{r=1}^{n_{\rm pert}}X_{e,r},
\qquad
\operatorname{Cov}(\widehat g_e)=\frac{\operatorname{Cov}(X_{e,1})}{n_{\rm pert}},
\qquad R_g=\frac{a_N}{n_{\rm pert}},
\label{eq:perturbation-average}
\end{equation}
The averaging law is exact at finite $\varepsilon$. The prediction of $a_N$ uses
the small-$\varepsilon$ approximation above.

We average the 128 GPU probe vectors in disjoint groups, using
$n_{\rm pert}=1,2,4,8,16,32$. We measure covariance across groups. The measured
variance follows $1/n_{\rm pert}$ at both stages for $N=1,2,8,256$.
Figure~\ref{fig:allocation-variance}, right, shows 10,000 updates. Doubling
$n_{\rm pert}$ halves relative gradient variance and doubles update compute.

\begin{figure}[!htbp]\centering
\begin{tikzpicture}
\begin{groupplot}[
 group style={group size=2 by 1,horizontal sep=1.55cm},
 width=5.5cm,height=4.65cm,scale only axis,
 xmode=log,log basis x=2,ymode=log,
 grid=major,grid style={gray!22},tick pos=left,tick align=outside,
 tick label style={font=\scriptsize},label style={font=\scriptsize},
 title style={font=\small},
 ylabel={Relative gradient variance},
 every axis plot/.append style={line width=.95pt,mark size=2pt},
 legend style={draw=none,fill=none,font=\scriptsize,
 at={(.5,1.03)},anchor=south,legend columns=2,column sep=6pt},
 title style={at={(.5,1.24)},anchor=south,font=\small}]
\nextgroupplot[
 title={More shards\\$n_{\rm pert}=64,\ B=2$},title style={align=center},
 xlabel={Number of experts $N$},xmin=.8,xmax=320,
 xtick={1,2,8,256},xticklabels={$1$,$2$,$8$,$256$},
 ymin=180,ymax=55000,ytick={1000,10000}]
\addplot[somaE,mark=*]
 table[x=N,y=relative_variance,col sep=comma]{figdata/controlled_tests/precision_stage10000.csv};
\addlegendentry{Measured}
\addplot[black,dashed]
 table[x=N,y=predicted_relative_variance,col sep=comma]{figdata/controlled_tests/precision_stage10000.csv};
\addlegendentry{Prediction}
\nextgroupplot[
 title={More perturbations\\$B=2$},title style={align=center},
 xlabel={Perturbations per expert $n_{\rm pert}$},xmin=.8,xmax=40,
 xtick={1,2,4,8,16,32},xticklabels={$1$,$2$,$4$,$8$,$16$,$32$},
 ymin=350,ymax=3500000,ytick={1000,10000,100000,1000000}]
\addplot[somaA,mark=*]
 table[x=npert,y=relative_variance,col sep=comma]{figdata/controlled_tests/precision_perturbation_N1_stage10000.csv};
\addlegendentry{$N=1$}
\addplot[somaB,mark=square*]
 table[x=npert,y=relative_variance,col sep=comma]{figdata/controlled_tests/precision_perturbation_N2_stage10000.csv};
\addlegendentry{$N=2$}
\addplot[somaC,mark=triangle*]
 table[x=npert,y=relative_variance,col sep=comma]{figdata/controlled_tests/precision_perturbation_N8_stage10000.csv};
\addlegendentry{$N=8$}
\addplot[somaE,mark=diamond*]
 table[x=npert,y=relative_variance,col sep=comma]{figdata/controlled_tests/precision_perturbation_N256_stage10000.csv};
\addlegendentry{$N=256$}
\addplot[somaA,dashed,forget plot]
 table[x=npert,y=theory_relative_variance,col sep=comma]{figdata/controlled_tests/precision_perturbation_N1_stage10000.csv};
\addplot[somaB,dashed,forget plot]
 table[x=npert,y=theory_relative_variance,col sep=comma]{figdata/controlled_tests/precision_perturbation_N2_stage10000.csv};
\addplot[somaC,dashed,forget plot]
 table[x=npert,y=theory_relative_variance,col sep=comma]{figdata/controlled_tests/precision_perturbation_N8_stage10000.csv};
\addplot[somaE,dashed,forget plot]
 table[x=npert,y=theory_relative_variance,col sep=comma]{figdata/controlled_tests/precision_perturbation_N256_stage10000.csv};
\end{groupplot}
\end{tikzpicture}
\caption{More shards and more perturbations reduce relative gradient variance.
Lower is better. Centered estimator variance is divided by the squared exact gradient norm. Model size is roughly 8.44M in total (Section~\ref{sec:model-size}).
Left, $N=256$ has $118\times$ lower variance than $N=1$.
Right, doubling $n_{\rm pert}$ halves variance and doubles update compute.
Points use weights at 10,000 updates and the same two ensemble validation chunks on
an RTX~5090. Dashed lines are predictions for the recorded perturbation distribution, without
fitting. These tests measure variance at fixed weights, not learning speed.
Appendix~\ref{app:estimator-iso} gives the calculation.
}
\label{fig:allocation-variance}
\label{fig:estimator-iso}
\label{fig:perturbation-scaling}
\end{figure}
\subsection{Testing the variance predictions}\label{app:theory-empirical}

Comparing different trained ensembles changes more than the loss feedback. We therefore keep one ensemble fixed to test the predicted benefit of local losses and the preferred allocation between batches and perturbations. Both tests use four experts from the $N=256$ ensemble at 900,000 updates, with fixed weights in float64. Relative gradient error is
squared estimation error divided by the squared exact gradient norm.
The small-$\varepsilon$ calculation predicts variance. Finite-difference error
also includes any bias.

In Figure~\ref{fig:theory-empirical}, left, we either sum all four losses
before estimating the gradient or estimate each expert's gradient from its
own loss. We hold the weights and two ensemble validation chunks fixed.

For the batch test, we randomly select 128 coordinates per expert and
32 ensemble validation chunks. Exact sequence gradients give the pool mean gradient $g$ and relative sequence variance $\nu=8.32$, as defined in Appendix~\ref{app:variance-allocation-theory}.
Batches sample this pool with replacement. We use one sum over four experts, two sums over two experts each, or four individual losses. Each loss then supplies an
estimate over 512, 256, or 128 coordinates, respectively. The four experts,
data, and remaining coordinates stay fixed.

We choose $n_{\rm pert}B=256$ from the measured gradient variation before
measuring perturbation errors. All probes and both signs share each batch.
Errors are measured against the pool mean gradient. At $\varepsilon=10^{-4}$,
directional differences agree with exact derivatives to relative $\ell_2$
error below $6\times10^{-7}$.

The predicted best batch sizes are $B\simeq2.04,2.89,4.09$ for four experts per loss, two experts per loss,
and one expert per loss. The lowest measured values occur at $B=2,2,4$.
The predictions capture the error scale and the batch--perturbation tradeoff
on this fixed objective.

\begin{figure}[!htbp]\centering
\begin{tikzpicture}
\begin{groupplot}[
  group style={group size=2 by 1,horizontal sep=1.8cm},
  width=6.8cm,height=5.6cm,
  xmode=log,ymode=log,log basis x=2,
  grid=major,grid style={black!10},
  tick label style={font=\scriptsize},label style={font=\small},
  title style={font=\small},
  legend style={draw=none,fill=none,font=\scriptsize,
    cells={anchor=west},at={(.98,.98)},anchor=north east}]
\nextgroupplot[title={\shortstack{Independent expert losses\\$N=4$, $B=2$}},
  xlabel={Perturbations per expert $n_{\rm pert}$},ylabel={Relative gradient error},
  xmin=3.4,xmax=75,xtick={4,16,64},xticklabels={$4$,$16$,$64$},
  ymin=750,ymax=100000]
\addplot[secBlue,thick,dashed]
 table[x=npert,y=joint_theory,col sep=comma]{figdata/theory_empirical_A/theory.csv};
\addlegendentry{Four experts per loss}
\addplot[secTeal,thick,dashed]
 table[x=npert,y=separate_theory,col sep=comma]{figdata/theory_empirical_A/theory.csv};
\addlegendentry{One expert per loss}
\addplot[secBlue,forget plot,only marks,mark=*,mark size=2pt,
  error bars/.cd,y dir=both,y explicit]
 table[x=npert,y=joint_mean,y error=joint_se,col sep=comma]{figdata/theory_empirical_A/finite_summary.csv};
\addplot[secTeal,forget plot,only marks,mark=square*,mark size=2pt,
  error bars/.cd,y dir=both,y explicit]
 table[x=npert,y=separate_mean,y error=separate_se,col sep=comma]{figdata/theory_empirical_A/finite_summary.csv};
\nextgroupplot[title={\shortstack{Batch perturbation trade-off\\$N=4$, $n_{\rm pert}B=256$}},
  xlabel={Batch size $B$},ylabel={Relative gradient error},
  xmin=.85,xmax=38,xtick={1,2,4,8,16,32},
  xticklabels={$1$,$2$,$4$,$8$,$16$,$32$},
  legend style={at={(.02,.98)},anchor=north west}]
\addplot[secBlue,thick,dashed]
 table[x=B,y=predicted_relative_mse,col sep=comma]{figdata/theory_empirical_J/theory_fixed_work_N1.csv};
\addlegendentry{Four experts per loss}
\addplot[somaOrange,thick,dashed]
 table[x=B,y=predicted_relative_mse,col sep=comma]{figdata/theory_empirical_J/theory_fixed_work_N2.csv};
\addlegendentry{Two experts per loss}
\addplot[secTeal,thick,dashed]
 table[x=B,y=predicted_relative_mse,col sep=comma]{figdata/theory_empirical_J/theory_fixed_work_N4.csv};
\addlegendentry{One expert per loss}
\addplot[secBlue,forget plot,only marks,mark=*,mark size=2pt,
  error bars/.cd,y dir=both,y explicit]
 table[x=B,y=measured_relative_mse,y error=standard_error,col sep=comma]{figdata/theory_empirical_J/fixed_work_N1.csv};
\addplot[somaOrange,forget plot,only marks,mark=triangle*,mark size=2.3pt,
  error bars/.cd,y dir=both,y explicit]
 table[x=B,y=measured_relative_mse,y error=standard_error,col sep=comma]{figdata/theory_empirical_J/fixed_work_N2.csv};
\addplot[secTeal,forget plot,only marks,mark=square*,mark size=2pt,
  error bars/.cd,y dir=both,y explicit]
 table[x=B,y=measured_relative_mse,y error=standard_error,col sep=comma]{figdata/theory_empirical_J/fixed_work_N4.csv};
\end{groupplot}
\end{tikzpicture}
\caption{Keeping independent losses separate reduces gradient error at the same work.
Lower is better. Both panels use SOMA $N=4$ formed from SOMA $N=256$ at 900k updates, with fixed weights and dense probes. The right panel uses one loss for four experts, two losses for two experts each, or four individual losses. Relative error is $\E\|\widehat g-g\|^2/\|g\|^2$, using the exact
gradient $g$. Left fixes two ensemble validation chunks. At $n_{\rm pert}=64$, independent losses give
$4.02\times$ lower error, against $4.00\times$ predicted. Right selects 128
coordinates per expert and draws batches from a fixed pool of 32 chunks.
Its reference is the pool mean gradient, with measured $\nu=8.32$.
Perturbations are counted per expert. All loss groupings use the same
forward evaluations. Right holds $n_{\rm pert}B=256$, so larger batches leave
fewer perturbations. The total batch--perturbation budget is fixed at
$n_{\rm pert}B=256$, so increasing batch size reduces
the number of perturbation directions. Dashed lines are predictions.
Markers show mean central-difference error. Bars are standard errors over
32 sets of random perturbations on the left and 16 batch-and-perturbation
draws on the right.}
\label{fig:theory-empirical}
\end{figure}

\section{Training compute and communication}\label{app:compute-details}
An update that uses more evaluations is worthwhile only if its improvement repays the added cost. Parallel execution can shorten that update without reducing its aggregate work. We therefore distinguish arithmetic compute, recorded GPU time and allocated device time, then describe how independent expert training and synchronized perturbation training incur these costs.

The different compute totals refer to the same $N=256$ checkpoint after 4.16M updates per expert (Table~\ref{tab:n256-compute-accounting}). For expert $e$, let $\bar t_e$ be its recorded mean update time in seconds on one GPU, and let $U$ be its update count. Figures~\ref{fig:independent-evaluation} and~\ref{fig:inference-budget-frontier} use $U\sum_{e=1}^{256}\bar t_e/3600$, which gives 41.9k GPU-hours. The recorded means average 0.142 seconds across experts. These estimates add no separate initialization, preprocessing or idle-allocation cost. The W\&B total is a separately logged quantity.

\begin{table}[htbp]
\centering
\small
\begin{tabular}{lrl}
\toprule
Accounting method & GPU-hours & Calculation or recorded field \\
\midrule
Recorded expert mean times & 41.9k & $U\sum_{e=1}^{256}\bar t_e/3600$ \\
W\&B-reported physical total & 49.6k & \texttt{physical\_gpu\_hr} \\
Fixed 0.168-second reference & 49.7k & $256U(0.168)/3600$ \\
Fixed 0.140-second normalization & 41.4k & $256U(0.140)/3600$ \\
\bottomrule
\end{tabular}
\caption{Compute accounting for the same $N=256$ checkpoint. The first row is used for the endpoint in Figures~\ref{fig:independent-evaluation} and~\ref{fig:inference-budget-frontier}. The other rows preserve the separate logged total and fixed-rate conventions. They do not identify different training endpoints.}
\label{tab:n256-compute-accounting}
\end{table}

We report compute either as arithmetic work or as aggregate GPU time, with the unit stated in each comparison. Arithmetic work counts operations in recurrent bodies, projections and decoders, with each multiply-add counted as two operations. Training includes both signs of every perturbation, and inference includes only the selected experts. These counts exclude seed training, head updates, routing, communication and smaller elementwise operations. Aggregate GPU-hours instead sum time across participating devices, including communication and synchronization within recorded updates.

For context length $T$, effective batch $B$ per expert, $s$ updates and per-expert compute $F_N$ in operations per token, total training compute $C_{
m train}$ is
\begin{equation}
C_{\rm train}=2Tn_{\rm pert}BNF_Ns.
\label{eq:training-flops}
\end{equation}
At fixed model size, more shards make each expert cheaper. At fixed expert width, $F_N$ stays fixed and work grows with $N$. A width-32 expert uses 81.9k arithmetic operations per token. These counts measure compute without communication or waiting time.

\subsection{Independent expert training}\label{sec:scale}
Local expert losses allow FSO to distribute complete learning problems across GPUs. Each expert owns one RTX~5090, a disjoint corpus cluster, optimizer state and training loop, so its updates do not wait for another expert. Within each expert, we vectorize SPSA by prepending
a perturbation axis to the usual batch-first tensor layout. A tensor with shape
$(B, T, d_t)$ becomes $(n_{\rm pert}, B, T, d_t)$, allowing PyTorch tensor operations to process the
perturbations and batch elements in parallel on the GPU. The \texttt{pert\_chunk} setting specifies how
many perturbation directions are processed together in one tensor tile. Similarly, the \texttt{micro\_batch}
setting specifies how many training examples are processed for each perturbation.
The expert-count sweep uses $n_{\rm pert}{=}64$, batch $64$,
\texttt{pert\_chunk}=64, and \texttt{micro\_batch}=64, enabling saturated parallelism. The processes exchange no gradients, optimizer
states, activations, finite differences, or samples. Consequently, adding experts increases total device
work without increasing the wall-clock time of an expert update. The recorded expert mean times average 0.142 seconds per update, giving 36.3 aggregate GPU-seconds per global step and 41.9k GPU-hours at the endpoint. Table~\ref{tab:n256-compute-accounting} preserves the separately logged physical total and fixed-rate normalizations.

\subsection{Distributed perturbations}
A monolithic model cannot distribute independent expert updates, but it can distribute the evaluations needed to form one update. DDPP splits batches and perturbations across devices and combines their gradient estimates before updating the shared model (Figure~\ref{fig:dc}). Appendix~\ref{app:variance-efficiency} gives the measured one-, two-, eight- and sixteen-GPU configurations and their communication costs.

\begin{figure}[!htbp]\centering
\begin{tikzpicture}[
  font=\small,>={Stealth[length=2mm]},
  coordinator/.style={draw,rounded corners,minimum width=48mm,minimum height=12mm,
    align=center,fill=orange!12},
  network/.style={draw,rounded corners,minimum width=116mm,minimum height=8mm,
    align=center,fill=blue!6},
  worker/.style={draw,rounded corners,minimum width=25mm,minimum height=11mm,
    align=center,fill=green!7},
  upload/.style={->,dashed,line width=0.8pt,blue!65!black},
  broadcast/.style={->,line width=0.8pt,orange!85!black}]
  \node[coordinator] (co) {coordinator\\\scriptsize combine local gradient estimates};
  \node[network,below=13mm of co] (net) {Communication network};

  \node[worker,below=10mm of net,xshift=-42mm] (a)
    {node A\\\scriptsize local evaluations};
  \node[worker,below=10mm of net,xshift=-14mm] (b)
    {node B\\\scriptsize local evaluations};
  \node[font=\large,below=12mm of net,xshift=14mm] (dots) {$\cdots$};
  \node[worker,below=10mm of net,xshift=42mm] (m)
    {last node\\\scriptsize local evaluations};

  \draw[upload] ([xshift=-3mm]net.north) --
    node[left=3mm,align=right,font=\scriptsize] {local gradient\\estimates}
    ([xshift=-3mm]co.south);
  \draw[broadcast] ([xshift=3mm]co.south) --
    node[right=3mm,align=left,font=\scriptsize] {combined gradient\\estimate}
    ([xshift=3mm]net.north);

  \foreach \w in {a,b,m}{
    \draw[upload] ([xshift=-2mm]\w.north) -- ([xshift=-2mm]\w.north |- net.south);
    \draw[broadcast] ([xshift=2mm]\w.north |- net.south) -- ([xshift=2mm]\w.north);
  }
\end{tikzpicture}
\caption{DDPP synchronizes every global step. Nodes evaluate subsets of the batches and perturbations for one shared model. Dashed blue arrows carry local gradient estimates to the coordinator, and solid orange arrows return the combined estimate. FSO instead updates each expert using its own loss and exchanges no per-step updates.}
\label{fig:dc}
\end{figure}

\section{Optimizer controls}\label{app:figure2-controls}
Figure~\ref{fig:independent-evaluation} asks whether sharding improves test loss relative to spending the training budget on a monolithic model. We document the model sizes, optimizer settings, saved checkpoints and timing used for that comparison. The additional BPTT controls place these ZO results in the context of recurrent backpropagation, while the early tuning study tests sensitivity to optimizer settings.

Figure~\ref{fig:independent-evaluation} sums the per-expert GPU time to estimate aggregate compute at each saved checkpoint, without interpolating test losses. For the eight-GPU EGGROLL run, we sum the logged perturbation-generation and update times and multiply by eight for GPU-hours. Repeated log entries at resumed steps are counted once. The selected checkpoints nearest 10, 20, 30 and 40 hours are at 3.80k, 8.50k, 13.3k and 18.0k updates. Figure~\ref{fig:independent-evaluation} includes the final monolithic SPSA checkpoint at 34.0k updates and 172 GPU-hours, with test loss 1.93. Its ensemble validation loss is 1.95, a score on a different evaluation set. Training time excludes initialization, evaluation and checkpointing. The $N=2,8$ wall-clock estimates assume one GPU per expert, using rates from those recorded placements.

The two late $N=8$ ensembles in Figure~\ref{fig:independent-evaluation} use experts 0--3 at 53.5k, 58.5k, 54.5k and 54.5k updates. Experts 4--7 are at 14.5k, 13.0k, 15.0k and 15.5k updates in the first ensemble, and each is at 70.0k updates in the second. Multiplying each expert's update count by its recorded step time gives 390 and 700 aggregate GPU-hours. The longest expert time gives 80.7 and 104 wall-clock hours. 
Figure~\ref{fig:independent-evaluation} uses the long $N=256$ campaign, with 21 evaluated checkpoints through 4.16M updates. Its recorded mean step time across experts is 0.142 seconds and the slowest expert mean is 0.165 seconds. Summing expert times gives 41.9k GPU-hours, and the slowest expert gives 191 wall-clock hours. The EGGROLL curve includes 28 checkpoints through 122k updates, 2.12k GPU-hours and 265 hours. All plotted markers are checkpoint evaluations. Values below 10 aggregate GPU-hours are outside the displayed range, and curves start at the first measured checkpoint in that range. No shared starting point is imposed. Four available $N=8$ ensembles lie in this range, ending at 104 hours.

\paragraph{Monolithic SPSA.}
The 8.33M-parameter curve is the warm width-509 monolith from the fixed-parameter comparison. It reaches ensemble validation loss 1.95 and test loss 1.93 at 34.0k updates and 172 estimated GPU-hours, excluding its 3.66-hour seed.

\subsection{Monolithic perturbation controls near 150 GPU-hours}
\label{app:near150-controls}
A larger perturbation population offers the monolith an alternative to sharding. To test this allocation at a comparable total training cost, Table~\ref{tab:near150-controls} records the measured checkpoints used in the main-text comparison. The two new monolithic SPSA runs retain the common starting checkpoint, effective batch size $B=64$ and 1,024-byte context while increasing $n_{\rm pert}$ to 256 or 1024. They match the wide controls' initial learning rate 0.0025, $\varepsilon$ schedule tied to the learning rate, Adam settings and local decoder updates. These runs vary perturbation count without constituting a learning-rate sweep. They use 16 or 32 RTX~5090 GPUs with DDPP, respectively. Their aggregate cost sums recorded time across participating GPUs, including communication and waiting. All rows use the same 4,096-document test set. Figure~\ref{fig:independent-evaluation} displays every evaluated checkpoint within its aggregate-compute range, including the endpoints of both runs.

\begin{table}[htbp]
\centering
\small
\begin{tabular}{lrrrr}
\toprule
Configuration & $n_{\rm pert}$ & Updates & GPU-hours & Test loss\\
\midrule
SOMA $N=2$ & 64 & 65.0k & 150 & 1.76\\
Monolithic SPSA & 64 & 30.0k & 152 & 2.00\\
Monolithic SPSA & 256 & 1.32k & 150 & 2.11\\
Monolithic SPSA & 1024 & 839 & 148 & 2.00\\
Monolithic EGGROLL & --- & 7.40k & 141 & 2.21\\
\bottomrule
\end{tabular}
\caption{Measured test losses near 150 aggregate GPU-hours for models of approximately 8.44M parameters. Updates are checkpoint counters, per expert for SOMA. Test loss is in nats/byte. Device costs include communication and waiting, so this compares the measured training systems rather than arithmetic work alone. Values are rounded to three significant figures. EGGROLL uses its population of $2^{20}$ and $B=256$.}
\label{tab:near150-controls}
\end{table}

\paragraph{EGGROLL reproduction.}
We evaluate the archived FP32 six-layer GRUs at widths $20,29,58,118,339$. The reproduction uses rank-one perturbations, population $2^{20}$, batch 256, and 100 bytes per stream update on the 100B-byte corpus. Its embedding and output head are untied. The recorded settings are $\alpha=0.01$, \texttt{sigma\_shift}=4, and \texttt{float\_step}=0.01. For each width, we select the lowest logged ensemble validation loss from the seed-0 history and reevaluate its checkpoint. These results reproduce the logged losses within $7.4\times10^{-6}$ nats/byte. The selected updates are 435,300, 311,000, 206,700, 105,400, and 102,600, respectively. The four smaller jobs use two GPUs and 494--496 recorded GPU-hours. The largest uses eight GPUs and 1.80k GPU-hours. This compares the reproduced GRU-based EGGROLL recipe with LSTM-based SOMA. The matched LSTM SPSA controls isolate the effect of perturbation budget more directly.

\paragraph{SOMA and BPTT at common training budgets.}
We check whether BPTT's lower loss persists earlier in training.
Table~\ref{tab:training-budget-comparison} compares the recorded recipes
near 8.44M parameters. GPU-hours sum work across devices,
$H=s\sum_i g_i\tau_i/3600$, where $s$ is the update count, $g_i$ the GPU
count and $\tau_i$ the seconds per update for expert $i$.
SOMA uses the reference placements above. BPTT uses $4/2/1$ GPUs per
expert for $N=1/2/8$ and median update times with evaluation and checkpoint
windows removed. Both estimates exclude initialization. BPTT head
initialization is estimated at 1.52 GPU-hours on its recorded hardware. Its exact checkpoint
step is unknown.

\begin{table}[H]
\centering
\caption{BPTT reaches lower ensemble validation loss at these budgets.
Values are in nats/byte on the ensemble validation set, using roughly 8.44M
models (Section~\ref{sec:model-size}). Values interpolate measured losses,
except the marked BPTT $N=1$ entry, which uses its last checkpoint at
97.3 GPU-hours. Hardware and training recipes differ.}
\label{tab:training-budget-comparison}
\setlength{\tabcolsep}{7pt}
\begin{filecontents*}{figdata/budget_comparison/comparison_table.csv}
N,soma50,bptt50,soma75,bptt75,soma100,bptt100,bptt100_gpu_hours,bptt100_display
1,2.1237496386204926,1.0197612832907217,2.0590126541320415,1.020567160760289,2.0343014593876445,1.0091165435642147,97.25405038727655,1.0091$^{*}$
2,1.936271488818369,1.0711175090364402,1.8734037038449043,1.0652662814064278,1.8256569907050797,1.0534647329147635,100,1.0535
8,2.0879775941497902,1.1825662791655336,2.045562434445222,1.1797393989935843,2.0179161055583448,1.1776438742290267,100,1.1776
\end{filecontents*}
\pgfplotstabletypeset[
 col sep=comma,
 columns={N,soma50,bptt50,soma75,bptt75,soma100,bptt100_display},
 every column/.style={column type=r,fixed,fixed zerofill,precision=4},
 columns/N/.style={column name={$N$},column type=r,fixed,precision=0},
 columns/soma50/.style={column name={SOMA}},
 columns/bptt50/.style={column name={BPTT}},
 columns/soma75/.style={column name={SOMA}},
 columns/bptt75/.style={column name={BPTT}},
 columns/soma100/.style={column name={SOMA}},
 columns/bptt100_display/.style={column name={BPTT},string type,column type=r},
 every head row/.style={before row={\toprule
 &\multicolumn{2}{c}{50 GPU-hours}&\multicolumn{2}{c}{75 GPU-hours}&\multicolumn{2}{c}{100 GPU-hours}\\
 \cmidrule(lr){2-3}\cmidrule(lr){4-5}\cmidrule(lr){6-7}},after row=\midrule},
 every last row/.style={after row=\bottomrule}
]{figdata/budget_comparison/comparison_table.csv}
\end{table}

These SOMA recipes do not show a training advantage over backpropagation
on this task.

\paragraph{Fixed-parameter SOMA recipes.}
The wide $N=1,2,8$ SPSA controls retain a separate tied embedding/decoder in each expert and update it through Equation~\eqref{eq:delta}. The saved heads differ across experts and checkpoints. Their parameter totals therefore count each head, giving 8.33M, 8.41M and 8.55M parameters, respectively. The width-32 $N=256$ campaign instead uses one shared frozen head, giving 8.44M parameters. Table~\ref{tab:fixed-parameter-counts} records these conventions.
Figure~\ref{fig:independent-evaluation} compares these approximately equal-size models on the test set across aggregate training budgets. Near 150 GPU-hours, SOMA $N=2$ reaches lower loss than the tested monolithic controls and the more heavily sharded ensembles. This comparison addresses the predictive return on total training work, rather than the loss achieved by giving each expert the same elapsed time.

\subsection{Early learning-rate and perturbation-radius sensitivity}
\label{app:early-tuning}
A fixed optimizer setting can favor one architecture even when both receive the same training budget. We therefore give monolithic SPSA and SOMA $N=2$ the same learning-rate and $\varepsilon$ search, then repeat it across optimization seeds to test the early loss comparison. The monolith has width 509 and 8.33M parameters, and SOMA $N=2$ has width 361 per expert and 8.41M total parameters. Each configuration starts from its architecture's saved step-1 checkpoint, including optimizer state. We repeat the search with three optimization seeds that change the sampled batches and perturbations while holding those starting weights and optimizer states fixed. The monolith uses the original 100B-byte FineWeb-Edu stream, and the experts use its two semantic shards. All configurations use $n_{\rm pert}=64$, $B=64$ per expert, 1,024-byte context, coordinate activity 0.5 for ordinary weights and 1 for normalization gains, local decoder updates and coupled weight decay $10^{-4}$ before Adam. Adam uses $\beta_1=0.9$, $\beta_2=0.999$ and stabilizer $10^{-8}$. Training retains the wide controls' TF32 settings and uses microbatches and perturbation chunks of 16.

We first compare learning rates $\{0.00125,0.0025,0.005\}$ at initial $\varepsilon=0.0025$. At the selected learning rate, we then compare initial $\varepsilon\in\{0.00125,0.0025,0.005\}$, reusing the middle setting. This gives five configurations per architecture and optimization seed, or 30 configurations in total. Initial $\varepsilon$ is varied independently of the learning rate, with both retaining the inherited plateau multiplier schedule. Within each architecture and seed, configurations share the sampled batch and perturbation sequence at each update. Each configuration receives the same aggregate GPU compute budget, 1.65 GPU-hours on identical hardware. The monolith uses one GPU, and each of the two experts receives 0.825 hours on its own GPU. Runs stop after a complete update, giving 149--151 new updates for the monolith and 130--132 per expert for the ensemble.

Selection uses the same 976 ensemble validation windows for both architectures. We score saved checkpoints near one-quarter, one-half, three-quarters and the full training budget, and select settings separately for each architecture and seed using only the final validation loss. Each selected checkpoint is then evaluated once on the common 4,096-document test set. Evaluation disables TF32 and uses FP64 probability-mixture and loss reductions.

\begin{table}[htbp]
\centering
\small
\begin{tabular}{rrrr}
\toprule
Learning rate & Initial $\varepsilon$ & $N=1$ & $N=2$\\
 & & \multicolumn{2}{c}{Ensemble validation loss}\\
\midrule
0.00125 & 0.00250 & $2.2768 \pm 0.0040$ & $2.2370 \pm 0.0034$\\
0.00250 & 0.00250 & $2.3247 \pm 0.0149$ & $2.2708 \pm 0.0025$\\
0.00500 & 0.00250 & $2.4179 \pm 0.0267$ & $2.3379 \pm 0.0073$\\
0.00125 & 0.00125 & $2.2764 \pm 0.0040$ & $2.2370 \pm 0.0034$\\
0.00125 & 0.00500 & $2.2811 \pm 0.0048$ & $2.2404 \pm 0.0033$\\
\bottomrule
\end{tabular}
\caption{Learning-rate and $\varepsilon$ sensitivity at an equal aggregate GPU compute budget per configuration. Entries are mean $\pm$ sample standard deviation across three optimization seeds, in nats/byte. Selection uses each seed's unrounded validation losses.}
\label{tab:early-tuning}
\end{table}

All six searches select learning rate 0.00125, the lower edge of the tested range. The selected initial $\varepsilon$ is 0.00125 except for SOMA $N=2$ in seed 2, which selects 0.0025. After this equal tuning allowance, SOMA $N=2$ achieves lower test loss in all three seeds (Table~\ref{tab:early-tuning-test}), with a mean difference of 0.0396 nats/byte. This supports the early loss comparison across batch and perturbation randomness. Because the starting checkpoints are fixed and the training budgets are short, it does not establish repeatability across independently pretrained initializations or fully converged models.

\begin{table}[htbp]
\centering
\small
\begin{tabular}{lrrr}
\toprule
Optimization seed & Monolithic SPSA & SOMA ($N=2$) & Difference\\
\midrule
1 & 2.2641 & 2.2273 & 0.0368\\
2 & 2.2666 & 2.2204 & 0.0462\\
3 & 2.2576 & 2.2218 & 0.0358\\
\midrule
Mean & 2.2628 & 2.2232 & 0.0396\\
Sample standard deviation & 0.0047 & 0.0036 & 0.0057\\
\bottomrule
\end{tabular}
\caption{Test loss after selecting each seed's settings using only the ensemble validation set. All losses are nats/byte, and the difference is monolithic SPSA minus SOMA. Standard deviations measure optimization-seed variation with fixed starting checkpoints, not test-document uncertainty.}
\label{tab:early-tuning-test}
\end{table}

\section{Independent versus summed training losses}
The variance calculation isolates a benefit of independent loss feedback, but a smaller estimation error need not change training loss by the same amount. We test its effect on learning by changing only whether each expert receives its own loss or the sum of the experts' losses. Paired continuations hold the model, starting state, data, perturbations and compute fixed at both early and mature checkpoints.

\label{app:loss-separation}

We form SOMA $N=4$ using experts 0--3 from the earliest complete trained SOMA $N=8$,
at 2,000 updates. The model has 140k parameters under
Section~\ref{sec:model-size}. Each paired continuation starts with the same
weights and Adam moments. Heads stay frozen, $n_{\rm pert}=B=64$, and both
the learning rate and $\varepsilon$ stay at $10^{-3}$.

One arm uses each expert's loss. The other sums the four losses before
forming each finite difference. Summing keeps the expected gradient scale
unchanged. Both use the same data and sampled perturbation vectors, TF32
forward passes and FP64 scalar subtraction. We draw training windows from
one fixed 512 MiB sample of each original shard. The samples were selected
before these tests. Test evaluation follows Table~\ref{tab:experimental-setup}.

We test both loss rules with three random seeds for 1,000 further updates and evaluate at 0, 250,
500 and 1,000 updates. Each arm uses $2.75\times10^{15}$ arithmetic operations. All six
continuations take 0.94 GPU-hours including profiling. The final paired
loss reductions are 0.0362, 0.0339 and 0.0353 nats/byte
(Figure~\ref{fig:loss-separation}). Separate losses improve training with
architecture and compute held fixed. A check of the saved document scores
finds lower loss with separate losses on 4,090 of the 4,096 test documents in
every seed at 1,000 updates.

We then measure gradient variance at the common start and all six final ensembles.
At each state, both loss rules use the same weights, two batches of 64
training windows and 128 sampled perturbation vectors per expert.
Using each expert's own loss reduces variance by a factor of 3.19--4.64 on
the GPU and 3.20--4.65 in CPU FP64. The small-$\varepsilon$ prediction is $4\times$. Separate losses therefore reduce variance on the same models
where they improve learning.

\raggedbottom
\afterpage{\flushbottom}

We repeat the test to see whether local losses still help later in training.
The same SOMA $N=4$ now starts at 4M updates, with their saved Adam states,
frozen heads, learning rate and $\varepsilon$ of $10^{-5}$.
We retain the four training samples and test both loss rules with three random seeds and
$n_{\rm pert}=B=64$. We take 10,000 further updates and evaluate at 0,
1,000, 2,500, 5,000 and 10,000 updates on the same test set.
All paired data and direction hashes match at every update.
Each arm uses $2.75\times10^{16}$ arithmetic operations. The six runs take 32.2 aggregate GPU-hours on the same hardware.

Starting test loss is 1.80. Local losses lower it by 0.000235, while summed losses
raise it by 0.0000720. The three paired differences are 0.000300, 0.000332 and
0.000289 nats/byte. The benefit persists but is small at these saved settings.
The learning rate, $\varepsilon$ and continuation length differ between the early
and mature tests, so the change in effect size cannot be attributed to training
stage alone.

Repeating the fixed-batch variance test at the mature start and six final
states gives a 3.42--4.55-fold reduction with local losses, near the
small-$\varepsilon$ prediction of four. These measurements use the saved $\varepsilon$
of $10^{-5}$ and GPU forward passes.

\section{Measuring gradient error and GPU time}\label{app:variance-efficiency}
To compare allocations before committing to a full training run, we measure the accuracy and cost of one global update. A large independent BPTT batch supplies the reference gradient, and measured GPU time includes the communication and waiting required to form the ZO estimate. Pairing these quantities shows what each allocation buys per global step, while Figure~\ref{fig:independent-evaluation} tests the resulting training outcomes.

Figure~\ref{fig:variance-efficiency} shows measurements after 10,000 updates in models near 8.44M parameters (Section~\ref{sec:model-size}). We also test randomly-initialized expert bodies. Experts within each ensemble share weights from the original initializer. The saved runs began from pretrained weights, so initialization is a separate untrained control. At each stage, the $n_{\rm pert}$ and $B$ curves share the same $N=1$ weights and BPTT reference. The $N$ curve compares different ensembles.

Overlapping batches passed the $0.01$ cosine distance test early, so we checked larger BPTT references with disjoint batches. Table~\ref{tab:population-reference} gives the reference sizes and independent agreement.

For expert $e$, let $g_{e,\rm ref}$ be its BPTT gradient on the reference batch shared by its ensemble, and $\widehat g_e$ its ZO estimate on a sampled batch. Holding the reference fixed, expectation over batches and perturbations gives relative gradient estimation error
\[
\frac{\sum_e \E\|\widehat g_e-g_{e,\rm ref}\|^2}
{\sum_e\|g_{e,\rm ref}\|^2}.
\]
Both sums cover all experts and their perturbed parameters. This measures bias and variation across batches and perturbations. Lower is better.

We use 64 independent direction pairs with the saved $\varepsilon$ values and original perturbation sampling rule. For $N=1$, each pair uses 64 independent blocks of 16 sequences. We average the error over all subsets of $B/16$ blocks, reducing variation from batch choice. Both directions share each subset. Forward losses use FP32 and the subset moments use FP64. The $n_{\rm pert}$ curve averages independent directions using these moments at $B=64$. Other ensembles use directly measured $B=64$ batches. Two directions are evaluated together, so rounding may differ from larger training groups. Reference and ZO measurements use disjoint data. At initialization, identical weights and $\varepsilon$ are measured once and weighted by the expert count.

One global step updates every expert once. With $n_{\rm pert}=B=64$, FSO assigns
one GPU per expert for $N\in\{1,2,8,256\}$. We sum separately measured expert
update times for this placement. For $N=1$, increasing $n_{\rm pert}$ or $B$
to 128, 256 and 1024 uses DDPP on two, eight and 16 GPUs, respectively. The latter spans
two nodes. DDPP cost multiplies the global step time by the allocated GPU count,
including gradient communication and waiting. We average five timed steps after
three warm-up steps and report aggregate GPU-seconds per global step.
These update measurements are separate from the historical training rates used to place saved models at a total budget.

Increasing $n_{\rm pert}$ from 64 to 1024 lowers gradient error by about 94\% at both stages. Increasing $B$ over the same range lowers error by 0.27\% at initialization and 47\% after 10,000 updates. At initialization, averaging perturbations is much more effective than increasing $B$. The $N=256$ ensembles have $120\times$ and $210\times$ lower error than $N=1$, respectively.
Increasing $N$ replaces larger experts with smaller ones. Cheaper forwards
can offset the larger expert count. Lower gradient error alone does not
establish lower test loss.

\begin{table}[htbp]\centering\small
\begin{tabular}{rrrrr}
\toprule
& \multicolumn{2}{c}{Initialization} & \multicolumn{2}{c}{10,000 updates}\\
\cmidrule(lr){2-3}\cmidrule(lr){4-5}
$N$ & Batch size & Cosine distance & Batch size & Cosine distance\\
\midrule
1 & 8000 & 0.00002 & 8000 & 0.01045\\
2 & 8000 & 0.00002 & 8000 & 0.00503\\
8 & 8000 & 0.00002 & 4000 & 0.00492\\
256 & 8000 & 0.00013 & 1600 & 0.00759\\
\bottomrule
\end{tabular}
\caption{BPTT reference batch sizes and cosine distance to an independent batch of the same size. Trained $N=1$ is just above $0.01$. Using the independent references changes every plotted value by less than $4\%$.}
\label{tab:population-reference}
\end{table}

\section{Test set construction and evaluation}\label{app:independent-evaluation-data}
The training comparisons need an evaluation set that is separate from the documents used to fit the models. We construct a common test set screened against the reconstructed training streams and score saved checkpoints with the protocol in Table~\ref{tab:experimental-setup}. This appendix specifies how the documents are selected, which bytes are scored and what overlap checks are performed.

We construct this set from \path{HuggingFaceFW/fineweb-edu}, subset \texttt{sample-100BT}, using source files 40--139 in zero-based order. With seed 20260906, we draw 256 source row groups and forty candidate documents per group. The training-overlap checks exclude 1,433 of these 10,240 candidates. We then select sixteen documents per group. The final set spans 91 files and 3,700 web domains. No model scores are used to select documents.

Each document contributes one 1,025-byte window that stays within its boundaries. We count UTF-8 bytes, not Unicode characters. The set contains 4.20M window bytes. Routing sees the first 256 input bytes and loss scores the following 768 next-byte targets, giving 3.15M scored bytes per model. Hidden state resets between windows.

We exclude candidates sharing a source ID or an exact or normalized full-document copy with either reconstructed training prefix, including full boundary documents. Normalization applies Unicode NFKC, case folding and whitespace collapse. The selected documents and windows have unique hashes. No selected window shares a 128-byte substring with the BPTT expert validation set. The full 100B stream matches the retained corpus SHA256. The older 10B reconstruction matches the original ensemble validation range and all 256 archived chunk probes. These checks exclude known source-document overlap. They do not exhaustively rule out near duplicates or shorter shared passages.

The original eleven models, sample and analysis were fixed before scoring. After seeing those results, we added twenty models to cover every available $N$ and the additional budgets on the same documents. Their checkpoints and interpolation weights were fixed from training records before scoring each addition. We later evaluated the 34,000-update monolithic SPSA control on the same documents. The frozen dataset manifest, document IDs, byte offsets and hashes are recorded in \path{SOMA_EVALUATION/data/independent_v1/}.

Results apply to this sampling frame. The 95\% intervals use 10,000 paired document-bootstrap draws. We also resample whole hosts and source row groups. These measure sampling uncertainty conditional on these trained models, not training-seed variability.

On the test set, SOMA $N=2$ has the lowest estimated loss at the fixed-size budgets (Figure~\ref{fig:independent-evaluation}). Adding fixed-width experts also lowers loss through $N=256$, while total size and work grow. These results remain conditional on the trained models and budgets.

The supplement contains the figure CSVs, analysis scripts, and checkpoint manifest. A separate CPU evaluation package supplies frozen weights, fitted routers, examples, expected scores, and pinned dependencies. CPU runs reproduce the endpoint for SOMA $N=256$ and two representative controlled-test ensembles on the test set.

\subsection{Frozen evaluation on WikiText-103}
\label{app:wikitext-transfer}
We evaluate the saved FineWeb-Edu checkpoints on the raw WikiText-103 validation and test splits~\citep{merity2016wikitext} without updating model weights or the router. We retain each archived forward implementation to reproduce the original FineWeb scores. Table~\ref{tab:wikitext-transfer} contains the same approximately 8.44M-parameter checkpoints compared near 150 aggregate GPU-hours in Figure~\ref{fig:independent-evaluation}. SOMA $N=2$ also reaches lower loss on this external corpus, with 2.07 test nats/byte compared with 2.25--2.36 for the monolithic SPSA controls and 2.49 for EGGROLL.

We use the official \texttt{wikitext-103-raw-v1} splits from \texttt{Salesforce/wikitext}, revision \texttt{b08601e04326c79dfdd32d625aee71d232d685c3}. Text is encoded as UTF-8 without normalization. We split at top-level-heading-shaped rows and form 1,025-byte windows within each segment, advancing by 768 bytes. Each model receives the first 1,024 bytes and scores the targets at byte indices 257--1,024 from output indices 256--1,023, using zero-based indices. Hidden state resets for each window, and SOMA routes using only the observed prefix. Incomplete final windows are omitted. This gives 1.11M scored validation targets and 1.25M scored test targets, with identical targets for all models. We report byte-level loss rather than token perplexity. The original 4,096-document FineWeb test losses are reproduced within $1.3\times10^{-8}$ nats/byte before scoring WikiText. Forward operations use FP32 with TF32 disabled, with FP64 mixture and loss reduction. This is an external-corpus evaluation, not a verified absence of overlap with the pretraining corpus.

\begin{table}[htbp]
\centering
\small
\setlength{\tabcolsep}{4pt}
\begin{tabular}{lrrr}
\toprule
Checkpoint & Training compute & Validation & Test\\
 & (GPU-hours) & (nats/byte) & (nats/byte)\\
\midrule
\multicolumn{4}{l}{8.44M parameters}\\
SOMA $N=2$, $n_{\rm pert}=64$ & 150 & 2.047 & 2.066\\
Monolithic SPSA, $n_{\rm pert}=64$ & 152 & 2.240 & 2.254\\
Monolithic SPSA, $n_{\rm pert}=256$ & 150 & 2.348 & 2.363\\
Monolithic SPSA, $n_{\rm pert}=1024$ & 148 & 2.271 & 2.290\\
Monolithic EGGROLL & 141 & 2.477 & 2.493\\
\bottomrule
\end{tabular}
\caption{Frozen next-byte evaluation on WikiText-103 for approximately 8.44M-parameter models near 150 aggregate training GPU-hours. Training compute uses each checkpoint's recorded RTX~5090 accounting. Models and checkpoints are selected before observing WikiText scores.}
\label{tab:wikitext-transfer}
\end{table}

\section{Choosing a training allocation}\label{sec:independent-evaluation}
Choosing a training configuration requires deciding how much to spend on each update and how many updates to complete. The saved learning curves let us compare those choices over their measured range. We use them to select the lowest loss within a training budget, then consider how a deadline changes the feasible allocations.

\subsection{Training budget and target loss}\label{app:practitioner-guide}

To compare allocations at a fixed budget, we first translate the cost of one update into the number of updates that budget can buy. For context length $T$ and per-expert cost $F_N$ arithmetic operations per token, update cost is $u=2Nn_{\rm pert}BT F_N$. For GPU-hours, let $t_{\rm step}$ be each allocated GPU's full step time in seconds, including communication and waiting, and use $u=\sum_{\rm GPUs}t_{\rm step}/3600$. A budget $C$ in the same unit buys $\lfloor C/u\rfloor$ updates. Let $\widehat{\mathcal L}_{N,n_{\rm pert},B}(C)$ be loss estimated from the saved curves at that budget. Among settings that fit the available memory and GPUs, the loss-minimizing choice is
\begin{equation}
(N^*,n_{\rm pert}^*,B^*)\in\arg\min_{N,n_{\rm pert},B}
\widehat{\mathcal L}_{N,n_{\rm pert},B}(C).
\label{eq:practitioner-budget-loss}
\end{equation}
Interpolate between saved losses when they cover $C$.
At $10^{18}$ arithmetic operations, this leaves $B=256$ and 1024 as candidates for the
0.272M model. Larger budgets favor $(8,64,1024)$ below.
Near 8.44M parameters, the tested choice at 100 training GPU-hours is
$(2,64,64)$ (Figure~\ref{fig:independent-evaluation}).

\begin{table}[H]\centering\small
\caption{The saved 0.272M curves favor $(N,n_{\rm pert},B)=(8,64,1024)$ at these budgets.
Losses use the ensemble validation set from Figure~\ref{fig:allocation-loss}.
Ranges span linear- and log-compute interpolation.}
\label{tab:practitioner-budget}
\begin{tabular}{rr}
\toprule
Compute (operations) & Estimated ensemble validation loss\\
\midrule
$2\times10^{18}$ & 1.8046--1.8057\\
$4\times10^{18}$ & 1.7947--1.8000\\
$8\times10^{18}$ & 1.7848--1.7887\\
\bottomrule
\end{tabular}
\end{table}

This setting first reaches 1.90 at $1.76\times10^{18}$ arithmetic operations.
Interpolating the crossing gives $(0.965\text{--}1.35)\times10^{18}$ arithmetic operations.
The matched continuation in Table~\ref{tab:allocation-continuation}
instead retains all three allocations because their losses are nearly tied.

Gradient error per global step provides a first comparison of what an allocation buys (Figure~\ref{fig:variance-efficiency}), but the ensemble validation curves determine whether that accuracy leads to a target loss with less total compute.
For allocations that accumulate independently sampled batches, Equation~\eqref{eq:accumulation-variance} accounts for the additional averaging over directions as well as data.

\label{app:tail-plots}

\subsection{Trading perturbations for larger batches}\label{app:allocation-continuation}

We test whether larger batches improve learning more than additional
perturbations. The earlier $N=8$ sweeps in Figure~\ref{fig:allocation-loss}
motivate a paired continuation from the same weights and optimizer state.

\begin{figure}[!htbp]\centering
\begin{tikzpicture}
\begin{groupplot}[
 group style={group size=2 by 1,horizontal sep=1.05cm},
 width=5.60cm,height=4.55cm,scale only axis,
 xmode=log,xmin=.1,xmax=2000,
 xtick={.1,1,10,100,1000},
 ymin=1.65,ymax=2.65,ytick={1.8,1.9,2.2,2.4,2.6},
 xlabel={Aggregate GPU-hours},grid=major,grid style={gray!22},
 tick pos=left,tick align=outside,
 tick label style={font=\scriptsize},label style={font=\scriptsize},
 every axis plot/.append style={line width=.95pt,mark=*,mark size=1.9pt},
 legend style={draw=none,fill=none,font=\scriptsize,
 at={(.5,1.03)},anchor=south,legend columns=2,column sep=6pt},
 title style={at={(.5,1.24)},anchor=south,font=\small}]
\nextgroupplot[title={\shortstack{More perturbations\\$N=8$, $B=64$}},ylabel={\shortstack{Ensemble validation loss\\(nats/byte)}}]
\addplot[gray!70,densely dotted,mark=none,forget plot]
 table[x=aggregate_gpu_hours,y=validation_loss,col sep=comma]{figdata/aggregate_gpu_hours/allocation_target.csv};
\addplot[somaA] table[x=aggregate_gpu_hours,y=validation_loss,col sep=comma]{figdata/aggregate_gpu_hours/allocation_np16.csv};
\addlegendentry{$n_{\rm pert}=16$}
\addplot[somaB] table[x=aggregate_gpu_hours,y=validation_loss,col sep=comma]{figdata/aggregate_gpu_hours/allocation_N8.csv};
\addlegendentry{$n_{\rm pert}=64$}
\addplot[somaC] table[x=aggregate_gpu_hours,y=validation_loss,col sep=comma]{figdata/aggregate_gpu_hours/allocation_np256.csv};
\addlegendentry{$n_{\rm pert}=256$}
\addplot[somaD] table[x=aggregate_gpu_hours,y=validation_loss,col sep=comma]{figdata/aggregate_gpu_hours/allocation_np1024.csv};
\addlegendentry{$n_{\rm pert}=1024$}
\nextgroupplot[title={\shortstack{Larger batches\\$N=8$, $n_{\rm pert}=64$}},yticklabels=\empty]
\addplot[gray!70,densely dotted,mark=none,forget plot]
 table[x=aggregate_gpu_hours,y=validation_loss,col sep=comma]{figdata/aggregate_gpu_hours/allocation_target.csv};
\addplot[somaA] table[x=aggregate_gpu_hours,y=validation_loss,col sep=comma]{figdata/aggregate_gpu_hours/allocation_b16.csv};
\addlegendentry{$B=16$}
\addplot[somaB] table[x=aggregate_gpu_hours,y=validation_loss,col sep=comma]{figdata/aggregate_gpu_hours/allocation_N8.csv};
\addlegendentry{$B=64$}
\addplot[somaC] table[x=aggregate_gpu_hours,y=validation_loss,col sep=comma]{figdata/aggregate_gpu_hours/allocation_ac4.csv};
\addlegendentry{$B=256$}
\addplot[somaD] table[x=aggregate_gpu_hours,y=validation_loss,col sep=comma]{figdata/aggregate_gpu_hours/allocation_ac16.csv};
\addlegendentry{$B=1024$}
\end{groupplot}
\end{tikzpicture}
\caption{Larger batches reach 1.90 with fewer GPU-hours in these runs.
All curves use $N=8$ and a 0.272M model (Section~\ref{sec:model-size}).
For $n_{\rm pert}=64,B=1024$, the first plotted point at or below 1.90 costs 28.7 GPU-hours, compared with 44.5 for $n_{\rm pert}=1024,B=64$. The larger batch uses $1.55\times$ fewer GPU-hours. Costs sum recorded mean step times across the eight experts. The dotted line marks 1.90. Batches of 256 and 1024 accumulate four and sixteen independently sampled batches of
64 with independently sampled directions. These runs are separate from
the 8.44M comparison in Figure~\ref{fig:independent-evaluation}.
The controlled continuation in Table~\ref{tab:allocation-continuation}
finds nearly the same loss for three allocations.
}
\label{fig:allocation-loss}
\label{fig:target-loss-work}
\end{figure}

All three settings continue the same $N=8$, 0.272M model from 2,000 updates.
Heads stay frozen, and the learning rate and $\varepsilon$ stay at
$10^{-3}$. Training data and precision follow
Appendix~\ref{app:loss-separation}. Each run takes 512 further updates and
$4.50\times10^{16}$ arithmetic operations. We use two paired seeds.

\begin{table}[H]
\centering
\caption{Nearly equal test loss after 512 further updates at equal
arithmetic operations. All settings continue the same $N=8$, 0.272M model. Values average
two paired seeds on all 4,096 test documents.}
\label{tab:allocation-continuation}
\begin{tabular}{rrr}
\toprule
$n_{\rm pert}$ & $B$ & Test loss (nats/byte) \\
\midrule
64 & 1024 & 2.376587 \\
256 & 256 & 2.376575 \\
1024 & 64 & 2.376625 \\
\bottomrule
\end{tabular}
\end{table}

Each update uses 1,024 directions per expert. More independently sampled batches reduce the
second term in Equation~\eqref{eq:accumulation-variance}, leaving the first
unchanged. The three settings finish within 0.00006 nats/byte. No setting
gives a clear loss advantage in this continuation.

We ask which saved checkpoint gives the lowest loss within limits on
total work and work per expert.

\begin{figure}[!t]\centering
\begin{tikzpicture}
\begin{axis}[
 width=11.1cm,height=6.7cm,scale only axis,
 xmode=log,ymode=log,
 xmin=.25,xmax=250,ymin=1,ymax=60000,
 xtick={.25,1,4,16,64,250},
 xticklabels={0.25,1,4,16,64,250},
 ytick={1,10,100,1000,10000,60000},
 yticklabels={1,10,100,{1,000},{10,000},{60,000}},
 xlabel={Training budget per expert (GPU-hours)},
 ylabel={Total training budget (GPU-hours)},
 title={8.44M ensemble parameters, $n_{\rm pert}=B=64$},
 title style={font=\small,yshift=23pt},
 tick pos=left,tick align=outside,
 tick label style={font=\scriptsize},label style={font=\small},
 legend columns=3,
 legend style={at={(.5,1.025)},anchor=south,draw=none,
 font=\small,cells={anchor=west},column sep=1em},
 axis on top]
\addplot[patch,patch type=rectangle,shader=flat,
 draw=somaB,line width=.08pt,forget plot,
 colormap={deadlineTwo}{color(0cm)=(somaB);color(1cm)=(somaB)}]
 table[x=y,y=x,col sep=comma]
 {figdata/plot_clarity_v2/deadline/vertices_N2.csv};
\addplot[patch,patch type=rectangle,shader=flat,
 draw=somaC,line width=.08pt,forget plot,
 colormap={deadlineEight}{color(0cm)=(somaC);color(1cm)=(somaC)}]
 table[x=y,y=x,col sep=comma]
 {figdata/plot_clarity_v2/deadline/vertices_N8.csv};
\addplot[patch,patch type=rectangle,shader=flat,
 draw=somaE,line width=.08pt,forget plot,
 colormap={deadlineMany}{color(0cm)=(somaE);color(1cm)=(somaE)}]
 table[x=y,y=x,col sep=comma]
 {figdata/plot_clarity_v2/deadline/vertices_N256.csv};
\addplot[white,line width=.5pt,no marks,forget plot,unbounded coords=jump]
 table[x=y,y=x,col sep=comma]
 {figdata/plot_clarity_v2/deadline/boundaries.csv};
\addplot[only marks,mark=*,mark size=1.6pt,
 mark options={draw=black,fill=white,line width=.45pt},forget plot,
 point meta=explicit symbolic,nodes near coords={\pgfplotspointmeta},
 every node near coord/.append style={anchor=south east,font=\scriptsize,
 fill=white,fill opacity=.9,text opacity=1,inner sep=1.4pt}]
 table[x=average_budget_gpu_hours,y=total_budget_gpu_hours,meta=label,col sep=comma]
 {figdata/plot_clarity_v2/deadline/examples_N2.csv};
\addplot[only marks,mark=*,mark size=1.6pt,
 mark options={draw=black,fill=white,line width=.45pt},forget plot,
 point meta=explicit symbolic,nodes near coords={\pgfplotspointmeta},
 every node near coord/.append style={anchor=south,font=\scriptsize,
 fill=white,fill opacity=.9,text opacity=1,inner sep=1.4pt}]
 table[x=average_budget_gpu_hours,y=total_budget_gpu_hours,meta=label,col sep=comma]
 {figdata/plot_clarity_v2/deadline/examples_N8.csv};
\addplot[only marks,mark=*,mark size=1.6pt,
 mark options={draw=black,fill=white,line width=.45pt},forget plot,
 point meta=explicit symbolic,nodes near coords={\pgfplotspointmeta},
 every node near coord/.append style={anchor=north east,font=\scriptsize,
 fill=white,fill opacity=.9,text opacity=1,inner sep=1.4pt}]
 table[x=average_budget_gpu_hours,y=total_budget_gpu_hours,meta=label,col sep=comma]
 {figdata/plot_clarity_v2/deadline/examples_N256.csv};
\addlegendimage{area legend,draw=none,fill=somaB}\addlegendentry{$N=2$}
\addlegendimage{area legend,draw=none,fill=somaC}\addlegendentry{$N=8$}
\addlegendimage{area legend,draw=none,fill=somaE}\addlegendentry{$N=256$}
\end{axis}
\end{tikzpicture}
\caption{More shards can lower loss under a tight per-expert budget.
Color gives the expert count of the lowest-loss saved checkpoint that
fits both budgets. We compare 44 checkpoints from $N=1,2,8,256$ on the
ensemble validation set of Figure~\ref{fig:loss_curves}, including both
$N=256$ recipes through 900k and 4.16M updates. $N=1$ is never selected. Average work per expert is total
GPU-hours divided by $N$. At one GPU per expert, it approximates average
elapsed training time. Labels give the selected checkpoint loss at the
marked budgets. Work is estimated from recorded update rates
and excludes initialization. The $N=256$ ensemble uses 0.168 seconds
per expert update. Region boundaries come from the saved checkpoints.
Training recipes follow the fixed-size controls in Figure~\ref{fig:independent-evaluation}, detailed in Appendix~\ref{app:figure2-controls}.
Model size follows Section~\ref{sec:model-size}.}
\label{fig:deadline-allocation}
\end{figure}

We also ask what more hardware buys under a deadline.
Near 8.44M parameters with $n_{\rm pert}=B=64$, the earlier $N=256$ recipe reaches an ensemble validation loss of
1.94 in an estimated 16.8 hours using 4.30k GPU-hours.
The $N=2$ recipe reaches 1.99 in 17.3 hours using 34.6 GPU-hours.
The $N=256$ recipe gives 0.051 lower loss at a similar training time,
but uses $124\times$ more total work and requires 256 concurrent GPUs
instead of two. Appendix~\ref{app:figure2-controls} specifies the training recipes and recorded update rates. Work excludes initialization.

\section{Inference cost and routing}\label{app:inference-benchmarks}
A model that is economical to train may still be expensive to use. Routing changes this trade-off by making only a subset of the ensemble active for each sequence. We first count the neural compute of that subset, then measure complete inference including routing, and finally examine how selecting more experts affects loss.

\label{sec:inference-allocation}
A training budget alone does not determine the deployment choice. At fixed ensemble parameter count, more shards make the routed experts smaller. Let $Q$ be the number of tokens and $F_N$ the arithmetic operations per token through one expert. Processing them costs $Q\min(4,N)F_N$ arithmetic operations.

Figure~\ref{fig:inference-budget-frontier} plots test loss against inference cost. SOMA $N=2$ gives the lowest loss at 100 estimated training GPU-hours. The later checkpoint for SOMA $N=256$ gives lower loss and uses $51.3\times$ less expert compute per token than the checkpoint for SOMA $N=2$, after much more training. Changing $n_{\rm pert}$ or $B$ affects training, but not inference cost for a fixed model.

\begin{figure}[!t]\centering
\begin{tikzpicture}
\begin{groupplot}[
 group style={group size=2 by 1,horizontal sep=.55cm},
 width=5.05cm,height=4.25cm,scale only axis,
 xmode=log,log basis x=10,xmin=.25,xmax=32,
 xtick={.32768,1,4,16},xticklabels={0.328,1,4,16},
 ymin=1.63,ymax=2.24,ytick={1.7,1.8,1.9,2.0,2.1,2.2},
 xlabel={\shortstack{Neural inference compute\\(million operations/token)}},
 title style={font=\small},
 tick align=outside,tick pos=left,
 tick label style={font=\scriptsize},label style={font=\small},
 ymajorgrids,grid style={gray!20},
 every axis plot/.append style={only marks,mark=*,mark size=2.5pt},
 error bars/y dir=both,error bars/y explicit,
 error bars/error bar style={line width=.8pt},
 error bars/error mark options={rotate=90,mark size=2.4pt,line width=.8pt}]
\nextgroupplot[title={100 training GPU-hours},ylabel={Test loss (nats/byte)}]
\addplot[somaA] table[x=inference_mflops_per_token,y=loss,y error minus=err_minus,y error plus=err_plus,col sep=comma]{figdata/plot_clarity_v2/inference/matched_N1.csv};
\addplot[somaB] table[x=inference_mflops_per_token,y=loss,y error minus=err_minus,y error plus=err_plus,col sep=comma]{figdata/plot_clarity_v2/inference/matched_N2.csv};
\addplot[somaC] table[x=inference_mflops_per_token,y=loss,y error minus=err_minus,y error plus=err_plus,col sep=comma]{figdata/plot_clarity_v2/inference/matched_N8.csv};
\addplot[somaE] table[x=inference_mflops_per_token,y=loss,y error minus=err_minus,y error plus=err_plus,col sep=comma]{figdata/plot_clarity_v2/inference/matched_N256.csv};
\node[font=\small,text=somaA,anchor=south] at (axis cs:16,2.039) {$N=1$};
\node[font=\small,text=somaB,anchor=north] at (axis cs:16,1.783) {$N=2$};
\node[font=\small,text=somaC,anchor=north] at (axis cs:7.4,1.980) {$N=8$};
\node[font=\small,text=somaE,anchor=west] at (axis cs:.41,2.172) {$N=256$};
\nextgroupplot[title={Later checkpoints},yticklabels={}]
\addplot[somaA] table[x=inference_mflops_per_token,y=loss,y error minus=err_minus,y error plus=err_plus,col sep=comma]{figdata/plot_clarity_v2/inference/late_N1.csv};
\addplot[somaB] table[x=inference_mflops_per_token,y=loss,y error minus=err_minus,y error plus=err_plus,col sep=comma]{figdata/plot_clarity_v2/inference/late_N2.csv};
\addplot[somaE] table[x=inference_mflops_per_token,y=loss,y error minus=err_minus,y error plus=err_plus,col sep=comma]{figdata/plot_clarity_v2/inference/late_N256.csv};
\node[font=\small,text=somaA,anchor=south east,align=center] at (axis cs:25,2.018) {$N=1$\\\scriptsize(152 GPU-hours)};
\node[font=\small,text=somaB,anchor=south east,align=center] at (axis cs:25,1.785) {$N=2$\\\scriptsize(150 GPU-hours)};
\node[font=\small,text=somaE,anchor=west,align=left] at (axis cs:.41,1.690) {$N=256$\\\scriptsize(41.9k GPU-hours)};

\end{groupplot}
\node[font=\small,anchor=south] at ([yshift=1.05cm]group c1r1.north east) {$n_{\rm pert}=B=64$, 8.44M ensemble parameters};
\end{tikzpicture}
\caption{More shards reduce inference cost, but training progress changes their ranking.
Left, the four settings from Figure~\ref{fig:independent-evaluation} at 100 estimated training GPU-hours, with losses interpolated between checkpoints. Right, recorded $N=1,2,256$ checkpoints at 152, 150, and 41.9k estimated training GPU-hours, using the recorded-update accounting in Table~\ref{tab:n256-compute-accounting}. Their training budgets differ. Both panels use the same test set. Bars show 95\% document-bootstrap intervals and exclude interpolation uncertainty on the left. Arithmetic operations count the selected experts and exclude routing. Parameter counts and arithmetic operations follow Sections~\ref{sec:model-size} and~\ref{sec:compute-cost}.}
\label{fig:inference-budget-frontier}
\end{figure}
\subsection{Routing and model execution}
Reducing active neural compute does not determine total latency because routing and grouping requests also take time. We therefore compare complete inference for a monolith and SOMA $N=2,8,256$ on the same GPU, with total neural sizes of 8.33M, 8.41M, 8.55M and 8.44M parameters, respectively. Each request processes a 1,024-byte context and returns next-byte probabilities for its final 768 positions. The monolith executes one model and SOMA $N=2$ averages both experts, so neither requires routing. For SOMA $N=8,256$, top-$k$ routing with $k=4$ uses the first 256 observed bytes before recurrent execution.

The optimized router caches a contiguous transpose of the fitted SVD matrix on the CPU. All 4,096 tested prefixes retain identical ordered top-$k$ choices. Requests are grouped by expert, with both SOMA $N=8$ and $N=256$ using grouped Triton recurrent kernels. The timer includes routing, transfers, dispatch, recurrent execution, softmax and probability averaging, ending after GPU synchronization. Models and router remain resident. Loading, compilation and copying the full output to the CPU are excluded. Tokens/s measures teacher-forced sequence scoring, not autoregressive generation.

We tune batch sizes, dispatch padding and kernel settings on NVIDIA A40 GPUs, then confirm the fastest validated settings sequentially on the same idle GPU with two CPU threads, FP32 and TF32 disabled. Each timing is the median of seven repetitions after two warmups. Monolithic SPSA and SOMA $N=2$ retain cuDNN, which outperforms their tested Triton implementations. Both routed models use optimized grouped Triton. Their complete predictions agree with independent cuDNN references within $2.15\times10^{-6}$ in probability and $1.4\times10^{-8}$ in mean loss. These are the fastest validated configurations in our bounded search, not a guarantee of globally optimal implementations.

\begin{table}[!t]
\centering\small
\setlength{\tabcolsep}{4pt}
\begin{tabular}{lrrrr}
\toprule
Model & Batch & \shortstack{Throughput\\(M tokens/s)} & \shortstack{Batch\\latency (ms)} & \shortstack{Peak GPU\\memory (GiB)}\\
\midrule
Monolithic SPSA & 768 & 0.555 & 1,060 & 16.5\\
SOMA $N=2$ & 768 & 0.468 & 1,260 & 12.3\\
SOMA $N=8$ & 1,024 & 0.257 & 3,060 & 15.0\\
SOMA $N=256$ & 4,096 & 2.36 & 1,330 & 8.30\\
\bottomrule
\end{tabular}
\caption{Complete warm inference at the fastest validated settings, including routing when needed. Each token is one byte, with 768 evaluated tokens per 1,024-byte request. Values are medians of seven repetitions on the same GPU. Peak allocated memory includes intermediates. Compilation and warmup are excluded.}
\label{tab:inference-full}
\end{table}

SOMA $N=256$ achieves $9.19\times$ the throughput of SOMA $N=8$. At the selected batches, routing takes 0.87\% and 7.50\% of total time for SOMA $N=8$ and $N=256$, respectively. Optimizing SOMA $N=8$ increases throughput $3.07\times$ over its earlier implementation. It still trails the monolith: profiling identifies recurrent execution, normalization and dispatch padding as substantial costs. Four width-181 experts also have a combined activation width of 724, versus 509 for the monolith, so fewer active parameters need not imply proportionally less execution time.

To distinguish inference throughput from training progress, Table~\ref{tab:inference-n8-n256-loss} reports the tested checkpoints and their training budgets. The later SOMA $N=256$ checkpoint has 0.0330 nats/byte lower test loss than SOMA $N=8$, after longer training. At roughly 100 estimated training hours, SOMA $N=8$ instead has 0.0410 nats/byte lower loss. Thus the endpoint comparison does not establish a matched-duration loss advantage for SOMA $N=256$.

\begin{table}[!t]
\centering\small
\setlength{\tabcolsep}{5pt}
\begin{tabular}{lrrrrr}
\toprule
Model & Parameters & Updates & \shortstack{Estimated training\\wall-clock hours} & \shortstack{Aggregate training\\GPU-hours} & Test loss\\
\midrule
SOMA $N=8$ & 8.55M & 53.5k--70.0k & 104 & 700 & 1.71\\
SOMA $N=256$ & 8.44M & 2.18M & 100 & 22.0k & 1.75\\
SOMA $N=256$ & 8.44M & 4.16M & 191 & 41.9k & 1.68\\
\bottomrule
\end{tabular}
\caption{Test loss and training budgets for the approximately equal-total-size inference comparison. All rows use top-$k$ routing with $k=4$. The SOMA $N=8$ and later SOMA $N=256$ checkpoints are timed in Table~\ref{tab:inference-full}. The middle row supplies the nearest measured SOMA $N=256$ checkpoint to the training duration of SOMA $N=8$. Wall-clock and aggregate time retain the accounting used in Figure~\ref{fig:independent-evaluation}, excluding initialization, evaluation and checkpointing.}
\label{tab:inference-n8-n256-loss}
\end{table}

\subsection{Number of active experts}\label{app:topk-ablation}
The number of selected experts trades inference compute against the benefit of averaging more predictions. To measure that trade-off without changing training, the sweep in Section~\ref{sec:active-parameter-inference} and Figure~\ref{fig:topk-flops-frontier} evaluates $k=1,\ldots,N$ using the same saved 4M-update $N=8,32,256$ ensembles. For each sequence, the router selects the nearest $k$ centroids before the experts run. Scores use the ensemble validation set and average the selected experts' probabilities.

\begin{figure}[H]\centering
\begin{tikzpicture}
\begin{axis}[
  width=.82\linewidth,height=4.3cm,
  xmode=log,log basis x=2,
  xlabel={Active experts $k$},ylabel={\shortstack{Ensemble validation loss\\(nats/byte)}},
  xmin=.9,xmax=290,ymin=1.7,ymax=1.85,
  xtick={1,2,4,8,16,32,64,128,256},
  xticklabels={1,2,4,8,16,32,64,128,256},
  title={\shortstack{Fixed-width Inference $k$ Sweep: 4M updates per expert\\Width 32, $n_{\rm pert}=B=64$}},
  title style={font=\footnotesize,yshift=20pt},
  grid=both,grid style={black!10},legend columns=3,
  legend style={draw=none,at={(.5,1.02)},anchor=south,font=\footnotesize},
  tick label style={font=\footnotesize},label style={font=\footnotesize}]
\addplot[somaC,thick,mark=*,mark size=1.1pt]
 table[x=k,y=valcanon_nats,col sep=comma]{figdata/plot_clarity_v2/topk/n8.csv};
\addlegendentry{SOMA $N=8$}
\addplot[somaD,thick,mark=*,mark size=1.1pt]
 table[x=k,y=valcanon_nats,col sep=comma]{figdata/plot_clarity_v2/topk/n32.csv};
\addlegendentry{SOMA $N=32$}
\addplot[somaE,thick,mark=*,mark size=.7pt]
 table[x=k,y=valcanon_nats,col sep=comma]{figdata/plot_clarity_v2/topk/n256.csv};
\addlegendentry{SOMA $N=256$}
\addplot[black!55,dashed,no marks,forget plot]
 table[x=k,y=loss,col sep=comma]{figdata/plot_clarity_v2/topk/top4_line.csv};
\end{axis}
\end{tikzpicture}
\caption{Top-$k$ routing with $k=4$ captures most of the gain at the same training stage.
Each curve varies $k$ in a saved 4M-update ensemble. All experts have the same width and use $n_{\rm pert}=B=64$. SOMA $N=8$ and SOMA $N=256$ reach these checkpoints after approximately 176 and 192 elapsed hours, respectively. All scores use the same ensemble validation set and average the selected experts' probabilities. The dashed line marks $k=4$.}
\label{fig:topk-flops-frontier}
\end{figure}

\subsection{Complementary expert predictions}\label{app:wd-diversity}
We ask whether averaging two experts helps more when they make different errors.
For each of the 32,640 pairs in the $N=256$ ensemble, we evaluate the two
experts separately and average their predicted probabilities.
The loss reduction is the mean of the two separate losses minus the loss
of their averaged prediction. It averages 0.083 nats/byte across pairs.

To measure whether two experts struggle on the same tokens, we subtract
the across-expert mean loss at each token and correlate the remaining
errors. This removes the difficulty shared by all experts.
Figure~\ref{fig:n256-correlation} shows that pairs with less similar errors
benefit more from averaging.

\begin{figure}[H]\centering
\begin{tikzpicture}
\begin{axis}[
width=10cm,height=5.0cm,scale only axis,
tick label style={font=\small},label style={font=\small},
title style={font=\small,align=center},
title={$N=256,\ n_{\rm pert}=64,\ B=64$\\4M training steps per expert},
xlabel={Similarity of expert errors (correlation)},
ylabel={\shortstack{Loss reduction from averaging\\(nats/byte)}},
xmin=-.21,xmax=.31,ymin=.055,ymax=.11,
xtick={-.2,-.1,0,.1,.2,.3},
xticklabels={$-0.2$,$-0.1$,$0$,$0.1$,$0.2$,$0.3$},
ytick={.06,.08,.10},
yticklabel style={/pgf/number format/fixed,/pgf/number format/precision=2,font=\small},
grid=major,grid style={gray!20}]
\addplot[somaC,thick,mark=*,mark size=2pt]
table[x=correlation,y=gain,col sep=comma]{figdata/reviewer_evidence/n256_pair_gain_bins.csv};
\end{axis}
\end{tikzpicture}
\caption{Averaging helps more when experts make different errors.
The horizontal axis measures error similarity after removing shared token difficulty.
The vertical axis measures how much lower the averaged prediction's ensemble validation loss is
than the mean loss of the two experts alone. For example, 0.10 means a
reduction of 0.10 nats/byte. Each point averages about 1,300 expert pairs
with similar error correlations, using 749,568 ensemble validation tokens.
Pairs share experts. This tests averaging pairs of experts, not top-4 routing.}
\label{fig:n256-correlation}
\end{figure}

\section{Limitations and open questions}\label{app:limitations}

The measured ZO improvements do not establish an advantage over backpropagation. BPTT reaches lower ensemble validation loss at every tested budget in Table~\ref{tab:training-budget-comparison}.

At fixed total model size, sharding reduces expert width and joint adaptation across domains. Routing also limits the active model size. Improved estimation therefore need not give the lowest loss, as the modest-budget comparisons in Figure~\ref{fig:independent-evaluation} show. Each expert sees approximately $1/N$ of the corpus, with about 700 training passes per shard at $N=256$ (Appendix~\ref{app:experimental-details}). Late improvement thus involves substantial data reuse.

Gradient estimates remain noisy. At 10,000 updates, $N=256$ has relative centered variance 268 with $n_{\rm pert}=64$ (Appendix~\ref{app:estimator-iso}). This describes one training stage. Figure~\ref{fig:independent-evaluation} tests the alternative of increasing monolithic SPSA to $n_{\rm pert}=256$ and 1024 over the recorded budgets.

Paired continuations establish repeatability of the local-loss intervention, but not variability across full pretraining runs. Our pretraining scope is also limited to FineWeb-Edu, LSTM experts and one fixed tf--idf router. Other corpora, architectures and routing methods may have different trade-offs. The router does not adapt as experts learn, and the seed-trained byte decoder does not establish the feasibility of a larger token vocabulary.

\end{document}